\newtheorem{prop}{Proposition}
\definecolor{mydarkred}{rgb}{0.6,0,0}
\definecolor{mydarkgreen}{rgb}{0,0.6,0}
\title{Open-set Label Noise Can Improve Robustness Against Inherent Label Noise }
\author{Hongxin Wei$^{1}$ \quad Lue Tao$^{2,3}$ \quad Renchunzi Xie$^{1}$\thanks{Corresponding author.} \quad  Bo An$^{1}$\\
$^1$School of Computer Science and Engineering, Nanyang Technological University, Singapore \quad\\
$^2$ College of Computer Science and Technology, \\
Nanjing University of Aeronautics and Astronautics, China\\
$^3$ MIIT key Laboratory of Pattern Analysis and Machine Intelligence, China\\
{\tt\small \{hongxin001,xier0002\}@e.ntu.edu.sg \quad tlmichael@nuaa.edu.cn \quad boan@ntu.edu.sg}
}
\begin{document}

\maketitle

\begin{abstract}

Learning with noisy labels is a practically challenging problem in weakly supervised learning. In the existing literature, open-set noises are always considered to be poisonous for generalization, similar to closed-set noises. In this paper, we empirically show that open-set noisy labels can be non-toxic and even benefit the robustness against inherent noisy labels. Inspired by the observations, we propose a simple yet effective regularization by introducing Open-set samples with Dynamic Noisy Labels (ODNL) into training. With ODNL, the extra capacity of the neural network can be largely consumed in a way that does not interfere with learning patterns from clean data. Through the lens of SGD noise, we show that the noises induced by our method are random-direction, conflict-free and biased, which may help the model converge to a flat minimum with superior stability and enforce the model to produce conservative predictions on Out-of-Distribution instances. Extensive experimental results on benchmark datasets with various types of noisy labels demonstrate that the proposed method not only enhances the performance of many existing robust algorithms but also achieves significant improvement on Out-of-Distribution detection tasks even in the label noise setting.

\end{abstract}

\section{Introduction}

The success of deep neural networks (DNNs) heavily relies on a large number of training instances with fully accurate labels \cite{he2016deep}.
In real-world applications, it is expensive and time-consuming to collect such large-scale datasets.
To alleviate this problem, some surrogate methods are commonly used to improve labelling efficiency, such as online queries \cite{blum2003noise} and crowdsourcing \cite{yan2014learning}. These cheap but imperfect methods usually suffer from unavoidable noisy labels that can be easily memorized by Deep Neural Networks (DNNs), leading to poor generalization performance \cite{pmlr-v70-arpit17a, zhang2016understanding}. Therefore, designing robust algorithms against noisy labels is of great practical importance.



Label noise can be separated into two categories: 1) closed-set noise, where instances with noisy labels have true class labels within the noisy label set \cite{han2018co, jiang2017mentornet, patrini2017making, reed2014training, van2015learning, wei2020combating}. 2) open-set noise, where instances with noisy labels have some true class labels outside the noisy label set \cite{lee2019robust, sachdeva2021evidentialmix, wang2018iterative, xia2020extended}. 
In the existing literature, open-set noises are always considered to be harmful to the training of DNNs like closed-set noises \cite{lee2019robust, wang2018iterative, sachdeva2021evidentialmix, xia2020extended}. For example, ILON \cite{wang2018iterative} reduces the effect of open-set noises by using noisy label detector and contrastive loss to pull away noisy samples from clean samples. Extended T \cite{xia2020extended} uses an extended transition matrix to mitigate the impacts of open-set noises and EvidentialMix \cite{sachdeva2021evidentialmix} filters out the open-set examples by modelling the loss distribution. Therefore, there arises a question to be answered: \emph{Do open-set noises always play a negative role in the training?}

In this work, we answer this question by experiments with an open-set auxiliary dataset. We empirically show that additional open-set noises can be harmless to generalization if the number of open-set noises is sufficiently large. More surprisingly, we observe that the open-set noises can even benefit the robustness of neural networks against noisy labels from the original training dataset, while the closed-set counterpart could not. Based on the ``insufficient capacity'' hypothesis \cite{pmlr-v70-arpit17a}, an intuitive explanation for this phenomenon is that increasing the number of open-set auxiliary samples slows down the fitting of inherent noises. In other words, fitting open-set label noises consumes extra capacity of the network, thereby reducing the memorization of inherent noisy labels. The details of experimental results are presented in Subsection \ref{motivation}. 


Inspired by the above observations, we propose an effective and complementary method to improve robustness against noisy labels. The high-level idea is to introduce Open-set samples with Dynamic Noisy Labels (ODNL) into training.
In each epoch, we assign random labels to instances of open-set auxiliary dataset by uniformly sampling from the label set. In other words, the label of each open-set sample is independently random and is constantly changing over training epochs. In this manner, ODNL consistently produce ``benign hard examples'' to consume the extra representation capacity of neural networks, thereby preventing the neural network from overfitting inherent noisy labels. 

To further understand the effect of ODNL, we provide an analysis from the lens of SGD noise \cite{chen2021noise, wu2020noisy}, and formalize the differences and connections between our regularization and related methods. From the analysis, we show that the SGD noises induced by our method are 1) random-direction, helping the model converge to a flat minimum; 2) conflict-free, leading to good stability for the training; 3) biased, 
enforcing the model to produce more conservative predictions on OOD examples. In this manner, our method not only improves the robustness against noisy labels, but also provides benefits for OOD detection performance, which is also essential for the deployment of deep learning in safety-critical applications. 




To the best of our knowledge, we are the first to explore the benefits of open-set auxiliary dataset in learning with noisy labels. To verify the efficacy of our regularization, we conduct extensive experiments on both simulated and real-world noisy datasets, including CIFAR-10, CIFAR-100 and Clothing1M datasets. In summary, the proposed method can be easily adapted for a wide range of existing algorithms to prevent overfitting noisy labels. 
Furthermore, experimental results validate that our regularization could also achieve impressive performance for detecting OOD examples, even if the labels of training dataset are noisy.



\section{Problem setting and motivation}
In this section, we first briefly introduce the problem setting of learning from training data with noisy labels. Then we start in Subsection \ref{motivation} with an observational study highlighting the effect of open-set noisy examples from the auxiliary dataset on generalization and robustness against noisy labels. 

\subsection{Preliminaries: learning with inherent noisy labels} \label{preliminaries}
In this work, we consider multi-class classification with $k$ classes. Let $\mathcal{X}\subset\mathbb{R}^{d}$ be the feature space and $\mathcal{Y} = \{1, \ldots, k\}$ be the label space, we suppose the training dataset with $N$ samples is given as $\{\boldsymbol{x}_i, y_i\}^N_{i=1}$, where $\boldsymbol{x}_i \in \mathcal{X}$ is the $i$-th instance sampled from a certain data-generating distribution $\mathcal{D}_{\mathrm{train}}$ in an i.i.d. manner and $y_i \in \{1, \ldots, k\}$ represents its observed label that might be different from the ground truth. To prevent confusion, the noisy labels in the training dataset are referred to as \textit{inherent noisy labels}. A classifier is a function that maps the feature space to the label space $f: \mathcal{X} \rightarrow \mathbb{R}^{k}$ with trainable parameter $\boldsymbol{\theta} \in \mathbb{R}^{p}$. In this paper, we consider the common case where the function $f$ is a DNN with the softmax output layer. The objective function of standard training can be represented as a cross-entropy loss:
\begin{equation}
\mathcal{L}_{CCE}=\mathbb{E}_{\mathcal{D}_{\mathrm{train}}}\left[\ell\left(f(\boldsymbol{x} ; \boldsymbol{\theta}), y\right)\right] \simeq -\frac{1}{N} \sum_{i=1}^{N} \boldsymbol{e}^{y_i} \log f\left(\boldsymbol{x}_{i} ; \boldsymbol{\theta}\right),
\end{equation}
\noindent where $f\left(\boldsymbol{x}_{i} ; \boldsymbol{\theta}\right)$ denotes the output of the classifier $f$ with $\boldsymbol{\theta}$ for $\boldsymbol{x}_i$ and $\boldsymbol{e}^{y_i} := (0, \ldots, 1, \ldots, 0)^{\top} \in\{0,1\}^{k}$ is a one-hot vector and  only the $y_i$-th entry of $\boldsymbol{e}^{y_i}$ is 1.

In addition to the training data, we consider there is an unlabelled auxiliary dataset consisting of open-set instances $\{\widetilde{\boldsymbol{x}}_i\}^M_{i=1}$. Specifically, open-set instances are sampled from $\mathcal{D}_{\mathrm{out}}$, disjoint
from $\mathcal{D}_{\mathrm{train}}$. Therefore, they are also called Out-of-Distribution (OOD) instances. In real applications, it is easy to get such an auxiliary dataset, which is commonly used in OOD detection tasks \cite{hendrycks2019oe, papadopoulos2021outlier}.


\begin{figure}
    \centering
    \begin{subfigure}[b]{0.3\textwidth}
        \centering
        \includegraphics[width=\textwidth]{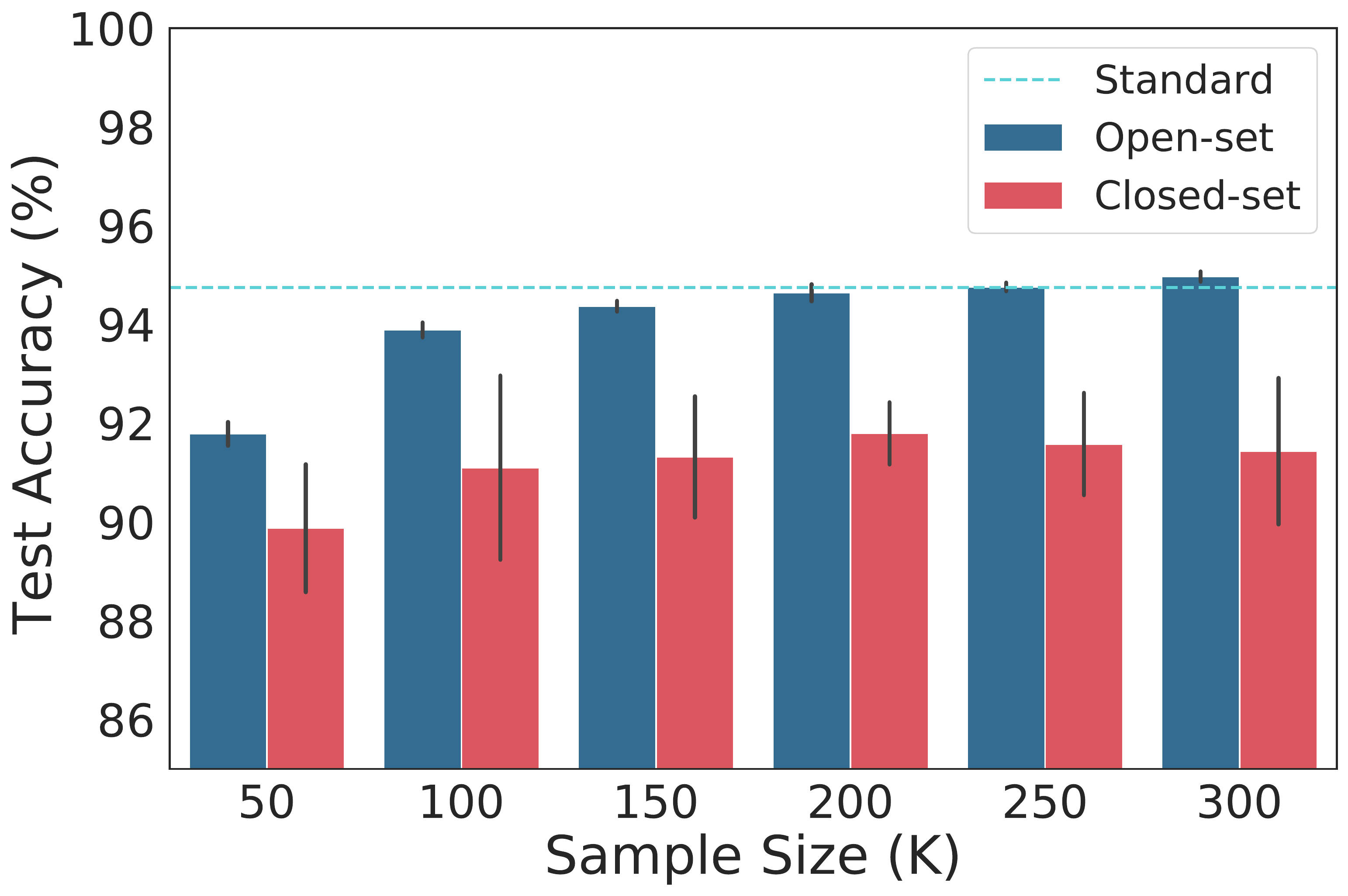}
        \caption{Learning with Clean data}
        \label{fig:size_clean}
    \end{subfigure}
    \hfill
    \begin{subfigure}[b]{0.3\textwidth}
        \centering
        \includegraphics[width=\textwidth]{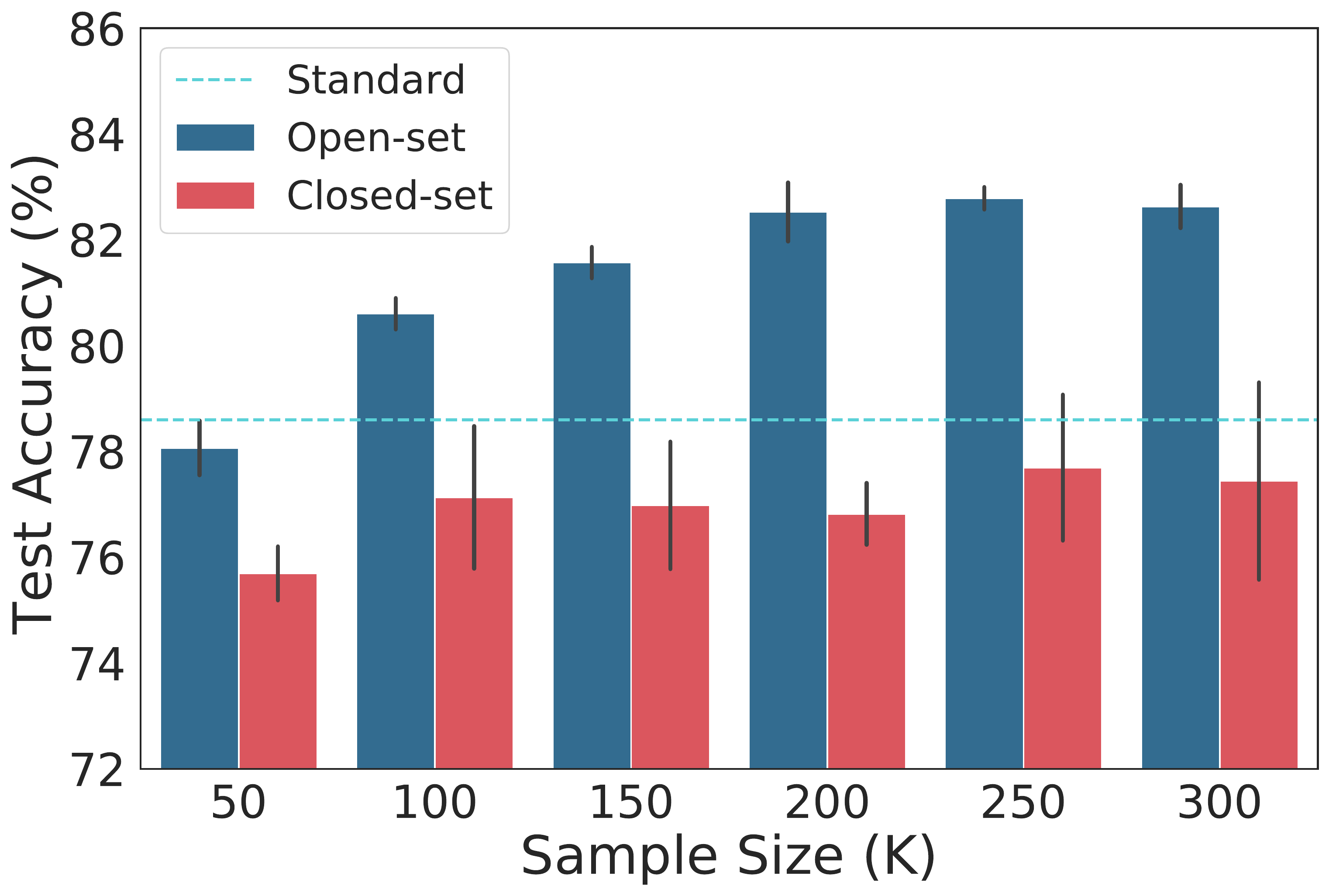}
        \caption{Learning with inherent noises}
        \label{fig:size_noise}
    \end{subfigure}
    \hfill
    \begin{subfigure}[b]{0.3\textwidth}
        \centering
        \includegraphics[width=\textwidth]{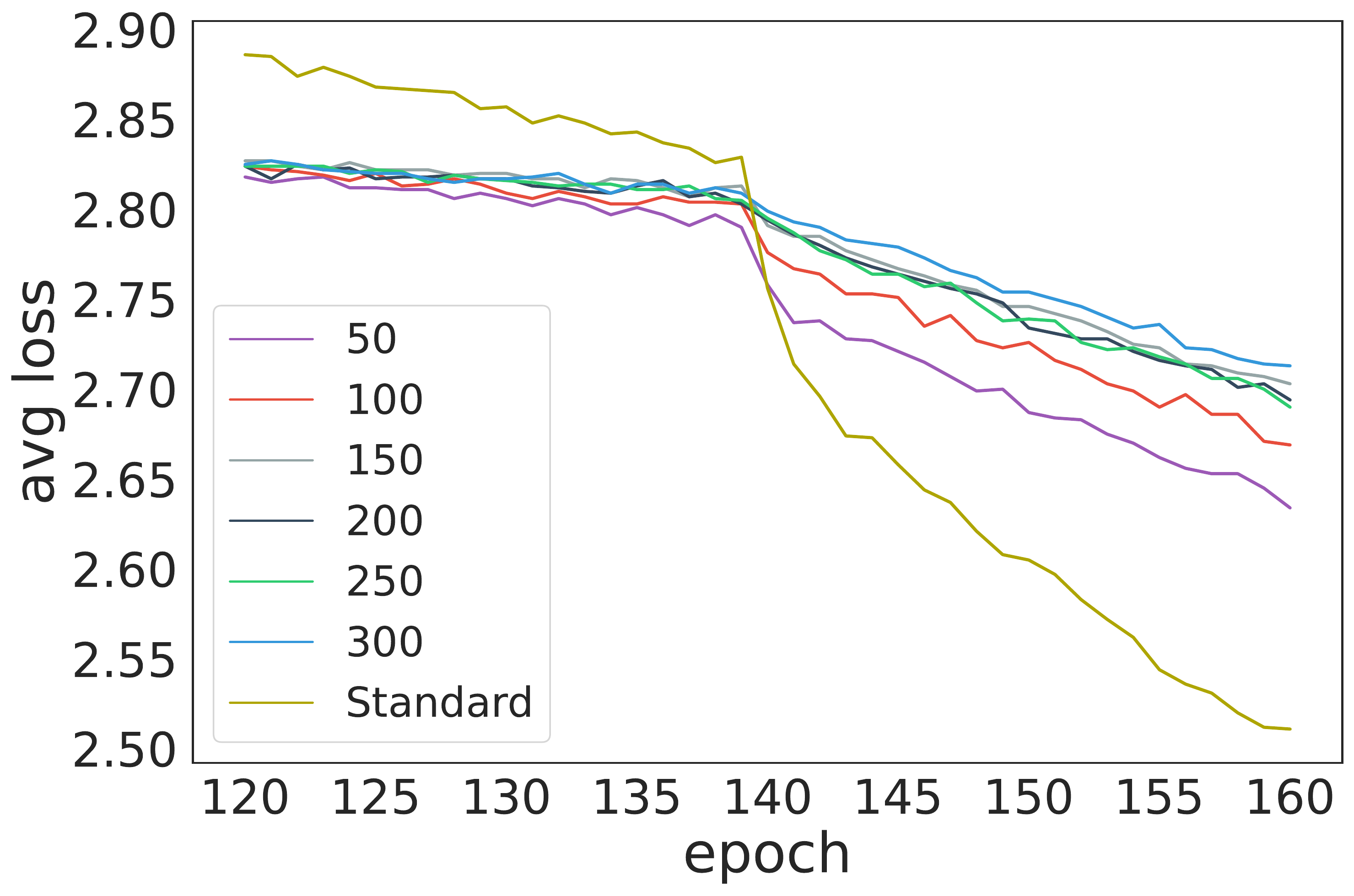}
        \caption{Losses of inherent noises}
        \label{fig:in_noise_loss}
    \end{subfigure}
    \caption{Average test accuracy (\%) on CIFAR-10 with (a) clean labels (b) inherent noisy labels under different auxiliary dataset sizes. We use ``open-set'' and ``closed-set'' to denote two types of auxiliary datasets. The results of standard training are also reported for comparison. All experiments are repeated five times and the standard deviations are presented as error bars. (c) Training losses of inherent noises around the 140th epoch under different auxiliary dataset sizes (K). }
    \label{fig:motivation}
\end{figure}

\subsection{Motivation: the role of open-set label noise in generalization and robustness} \label{motivation}


To analyze the effects of open-set noisy labels, we build an open-set noisy dataset with the unlabelled auxiliary dataset. For each open-set instance, we assign a fixed random label $\widetilde{y}_i$ uniformly sampled from the label set $\{1, \ldots, k\}$. In each iteration, we concatenate the two batches that are sampled from the training dataset and the open-set noisy dataset respectively, then perform standard training. 
In addition, we conduct controlled experiments by replacing the open-set dataset with a closed-set dataset to verify the effectiveness of open-set noisy dataset during the training. To demonstrate the importance of the sample size of the given auxiliary dataset, we compare the performance of training with different-sized auxiliary datasets, built by randomly sampling from the original auxiliary dataset. 

Throughout this subsection, we perform training with WRN-40-2 \cite{zagoruyko2016wide} on CIFAR-10 \cite{krizhevsky2009learning} using standard cross-entropy loss. For experiments under inherent label noise, we use symmetric noise with 40\% noisy rate. For auxiliary dataset, we use 300K Random Images\footnote{The dataset is published on \url{https://github.com/hendrycks/outlier-exposure}.} \cite{hendrycks2019oe} as the open-set auxiliary dataset and CIFAR-5m\footnote{The dataset is published on \url{https://github.com/preetum/cifar5m}.} \cite{ho2020denoising, nakkiran2020deep} as the closed-set auxiliary dataset. Specifically, 
all images that belong to CIFAR classes are removed in 300K Random Images so that $\mathcal{D}_{\mathrm{train}}$ and $\mathcal{D}_{\mathrm{out}}$ are disjoint. More training details can be found in Appendix \ref{app:exp_settings}. 


\noindent\textbf{Learning with clean labels.} Fig.~\ref{fig:size_clean} presents the results on clean training set of CIFAR-10, training with the open/closed-set noisy auxiliary dataset. The results show that open-set noisy labels from the auxiliary dataset become harmless gradually with the increase of the sample size of the auxiliary dataset, while closed-set noisy labels consistently deteriorate the generalization performance. Furthermore, we observe that closed-set noisy labels result in unstable training shown by the large standard deviations in the figure. It is worthy to note that open-set noisy labels do not encounter this issue.

\noindent\textbf{Learning with inherent noisy labels.} Fig.~\ref{fig:size_noise} presents the results on CIFAR-10 with inherent noisy labels, training with the open/closed-set noisy auxiliary dataset. Surprisingly, we can observe that as we increase the sample size of the open-set noisy dataset, the open-set noises can even improve the generalization performance beyond standard training without the noisy auxiliary dataset. In this case, closed-set noisy auxiliary dataset still plays a negative role in the training.

\noindent \textbf{An intuitive interpretation}. From the results, one can hypothesize that open-set noisy labels in the auxiliary dataset would be harmless and can even benefit the robustness against inherent noisy labels, if the sample size of the auxiliary dataset is large enough. This phenomenon is somewhat counter-intuitive since one would expect that open-set noisy labels should have been ``poison'' to the training of network, like closed-set noises \cite{lee2019robust, sachdeva2021evidentialmix, wang2018iterative, xia2020extended}. Here we provide an intuitive interpretation from ``insufficient capacity'' \cite{pmlr-v70-arpit17a}: increasing the number of examples while keeping representation capacity fixed would increase the time needed to memorize the data set. Hence, the larger the size of auxiliary dataset is, the more time it needs to memorize the open-set noises in the auxiliary dataset as well as the inherent noises in the training set, relative to clean data. This interpretation is supported by our plot of training losses of inherent noises with different-sized auxiliary datasets in Fig.~\ref{fig:in_noise_loss}, where increasing the number of open-set auxiliary samples slows down the fitting of inherent noises.



\section{Method: Open-set regularization with Dynamic Noisy Labels}

Motivated by the previous observations, we propose to utilize open-set auxiliary dataset to further improve the robustness against inherent noisy labels. This approach can be considered as an auxiliary task that continuously consumes the extra capacity of neural networks but does not interfere with learning patterns from clean data. Such auxiliary tasks will prevent deep networks from overfitting noisy data.

\noindent \textbf{Dynamic Noisy Labels.} To maintain the effectiveness of the auxiliary task during training, an ideal method is to use an auxiliary dataset with unlimited instances based on ``insufficient capacity'' \cite{pmlr-v70-arpit17a}. In this way, the network would always try to fit a stream of new instances sequentially in an online learning setting.
However, it is unrealistic to provide such an auxiliary dataset in real scenarios.

To relieve the requirement on sample size, we propose an alternative approach called Dynamic Noisy Labels. For each open-set instance $\widetilde{\boldsymbol{x}}_i$, a noisy label $\widetilde{y}_i$ is uniformly sampled from the label set $\{1,\ldots,k\}$ independently, regardless of the instance $\widetilde{\boldsymbol{x}_i}$.
More importantly, the generated labels are not fixed and would be changed dynamically over training epochs. In this way, the auxiliary task becomes an ``impossible mission", continuously consuming the extra capacity of network. Without Dynamic Noisy Labels, the effectiveness of the auxiliary task would be degraded gradually during training, because deep network can easily memorize the finite open-set samples, even though the labels of those samples are randomly generated \cite{pmlr-v70-arpit17a,zhang2016understanding}.


\noindent\textbf{Training Objective.} For the original training dataset, we use the standard cross entropy as the training objective function:
\begin{equation}
\mathcal{L}_{1}=\mathbb{E}_{\mathcal{D}_{\mathrm{train}}}\left[\ell\left(f(\boldsymbol{x} ; \boldsymbol{\theta}), y\right)\right]=\mathbb{E}_{\mathcal{D}_{\mathrm{train}}}\left[-\boldsymbol{e}^{y} \log f\left(\boldsymbol{x} ; \boldsymbol{\theta}\right)\right],
\end{equation}
For the auxiliary dataset, we enforce the outputs to be consistent with dynamic noise labels, uniformly sampled from the label set in each epoch. Therefore, the training objective function is as follows:
\begin{equation}
\label{eq:aux_loss}
\mathcal{L}_{2}=\mathbb{E}_{\mathcal{D}_{\mathrm{out}}}\left[\ell\left(f(\widetilde{\boldsymbol{x}} ; \boldsymbol{\theta}), \widetilde{y}\right)\right]= \mathbb{E}_{\mathcal{D}_{\mathrm{out}}}\left[-\boldsymbol{e}^{\widetilde{y}} \log f\left(\widetilde{\boldsymbol{x}} ; \boldsymbol{\theta}\right)\right], \text{where}\ \widetilde{y} \sim \mathcal{U}_k
\end{equation}
\noindent where $\mathcal{U}_k$ represents a discrete uniform distribution on the label set $\{1,\ldots, k\}$.

Combining the original training dataset and the auxiliary dataset in the training, now we can formalize ODNL as minimizing the final objective:
\begin{equation}
\label{eq:final_loss}
    \mathcal{L}_{\mathrm{total}} = \mathcal{L}_{1} + \eta \cdot \mathcal{L}_{2} = \mathbb{E}_{\mathcal{D}_{\mathrm{train}}}\left[\ell\left(f(\boldsymbol{x} ; \boldsymbol{\theta}), y\right)\right] + \eta \cdot \mathbb{E}_{\mathcal{D}_{\mathrm{out}}}\left[\ell\left(f(\widetilde{\boldsymbol{x}} ; \boldsymbol{\theta}), \widetilde{y}\right)\right]
\end{equation}
\noindent where $\eta$ denotes the trade-off hyperparameter. The details of ODNL are provided in Algorithm \ref{alg:ODNL}.


\noindent\textbf{Extensions to Other Robust Training Algorithms.} It is worthy to note that our regularization is a general method and can be easily incorporated into existing robust training algorithms, including sample selection\cite{han2018co, wei2020combating}, loss correction \cite{patrini2017making}, robust loss function \cite{zhang2018generalized,wang2019symmetric}, etc. 
Given the original learning objective $\mathcal{L}_{\mathrm{robust}}$ of the robust methods and the hyperparameter $\eta$, we can formalize the final objective as:
\begin{align}
    \mathcal{L}_{\mathrm{total}} &= \mathcal{L}_{\mathrm{robust}} + \eta \cdot \mathcal{L}_{2}
\end{align}
\begin{algorithm}[ht]
\caption{Open-set Regularization with Dynamic Noisy Labels}
\label{alg:ODNL}
\begin{algorithmic}[1]

\REQUIRE Training dataset $\mathcal{D}_{\mathrm{train}}$. Open-set auxiliary dataset $\mathcal{D}_{\mathrm{out}}$;\\

\FOR{each iteration}
\STATE Sample a mini-batch of original training data $\{(\boldsymbol{x}_i, y_i)\}^{n}_{i=0}$ from $\mathcal{D}_{\mathrm{train}}$;\\
\STATE Sample a mini-batch of open-set data $\{\widetilde{\boldsymbol{x}}_i\}^{m}_{i=0}$ from $\mathcal{D}_{\mathrm{out}}$;\\
\STATE Generate random noisy label $\widetilde{y}_i \sim \mathcal{U}_k$ for each open-set data $\widetilde{\boldsymbol{x}}_i$;\\
\STATE Perform gradient descent on $f$ with $\mathcal{L}_{\mathrm{total}}$ from Equation (\ref{eq:final_loss});\\


\ENDFOR

\label{code:open_reg}
\end{algorithmic}
\end{algorithm}

\section{Understanding from SGD noise} \label{understanding}

To further understand the effect of our regularization, we provide an analysis from the lens of SGD noise. Following SLN \cite{chen2021noise}, here we consider an original training sample $\left({\boldsymbol{x}},y\right)$ and an open-set sample $\widetilde{\boldsymbol{x}}$ at each SGD step for convenience. In parameter updates, our regularization adds noises to the original gradients with the open-set noisy sample  $\widetilde{\boldsymbol{x}}$ and dynamic noisy label $\widetilde{y} \sim \mathcal{U}_{k}$: 
\begin{equation}
\label{eq:gradient}
    \widetilde{\nabla}_{\boldsymbol{\theta}}\ell_{\mathrm{total}} = \nabla_{\boldsymbol{\theta}}\ell(f(\boldsymbol{x};\boldsymbol{\theta}),y) + \eta \cdot \nabla_{\boldsymbol{\theta}}\ell(f(\widetilde{\boldsymbol{x}};\boldsymbol{\theta}),\widetilde{y})
\end{equation}
\noindent where $\ell$ denotes the cross entropy loss for single sample.
For convenience, we omit the hyperparameter $\eta$ and use $\boldsymbol{f}, \boldsymbol{\widetilde{f}}$ to indicate the model output $f\left(\boldsymbol{x}; \boldsymbol{\theta}\right)$ for the original training sample and the model output $f\left(\widetilde{\boldsymbol{x}}; \boldsymbol{\theta}\right)$ for the open-set noisy sample. Following the standard notation of the Jacobian matrix, we have $\nabla_{\boldsymbol{\theta}} \ell \in \mathbb{R}^{1 \times \dot{p}}, \nabla_{\boldsymbol{f}} \ell \in \mathbb{R}^{1 \times k}, \nabla_{\boldsymbol{\theta}} \boldsymbol{f} \in \mathbb{R}^{k \times p}, \nabla_{\boldsymbol{\theta}_{i}} \boldsymbol{f} \in \mathbb{R}^{k}$ and $\nabla_{\boldsymbol{\theta}} \boldsymbol{f}_{i} \in \mathbb{R}^{1 \times p}$. The proofs of the following propositions are presented in Appendix \ref{app:proofs}. 

\begin{prop}
\label{prop:OND}
For the cross-entropy loss, Eq.~(3) induces noise $\boldsymbol{z} = - \frac{\nabla_{\boldsymbol{\theta}}\boldsymbol{\widetilde{f}}_j}{\boldsymbol{\widetilde{f}}_j}$ on $\nabla_{\boldsymbol{\theta}}\ell(\boldsymbol{f},y)$, s.t., $\boldsymbol{z} \in \mathbb{R}^{p}, j \sim \mathcal{U}_k$, where $\frac{\cdot}{\boldsymbol{\widetilde{f}}_j}$ denotes the element-wise division. Note that the expected value of noise on the $i$-th parameter $\boldsymbol{\theta}_i$ is $-\frac{1}{k}\sum_{j=1}^{k}\frac{\nabla_{\boldsymbol{\theta}_i}\boldsymbol{\widetilde{f}}_j}{\boldsymbol{\widetilde{f}}_j}$.
\end{prop}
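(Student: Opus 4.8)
The plan is to recognize that the claimed noise $\boldsymbol{z}$ is nothing more than the gradient contribution of the auxiliary loss term in Eq.~(3), viewed as an additive perturbation to the baseline gradient $\nabla_{\boldsymbol{\theta}}\ell(\boldsymbol{f},y)$. Setting aside $\eta$ as instructed, the perturbation is exactly $\boldsymbol{z} = \nabla_{\boldsymbol{\theta}}\ell(f(\widetilde{\boldsymbol{x}};\boldsymbol{\theta}),\widetilde{y})$. First I would expand this single-sample cross-entropy loss on the open-set instance. Because $\boldsymbol{e}^{\widetilde{y}}$ is one-hot with its only nonzero entry at index $\widetilde{y} =: j$, the inner product $-\boldsymbol{e}^{\widetilde{y}}\log f(\widetilde{\boldsymbol{x}};\boldsymbol{\theta})$ collapses the sum over all $k$ logits to a single term, giving $\ell = -\log \boldsymbol{\widetilde{f}}_j$. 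This collapse is the one structural fact doing the real work in the statement.

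Next I would differentiate with respect to $\boldsymbol{\theta}$ using the chain rule. Since $\ell = -\log \boldsymbol{\widetilde{f}}_j$ depends on $\boldsymbol{\theta}$ only through the scalar output $\boldsymbol{\widetilde{f}}_j$, we get $\nabla_{\boldsymbol{\theta}}\ell = -\tfrac{1}{\boldsymbol{\widetilde{f}}_j}\,\nabla_{\boldsymbol{\theta}}\boldsymbol{\widetilde{f}}_j$, which is precisely the claimed $\boldsymbol{z} = -\,\nabla_{\boldsymbol{\theta}}\boldsymbol{\widetilde{f}}_j / \boldsymbol{\widetilde{f}}_j$ with the element-wise division interpreted as scaling the gradient vector $\nabla_{\boldsymbol{\theta}}\boldsymbol{\widetilde{f}}_j \in \mathbb{R}^{1\times p}$ (equivalently $\mathbb{R}^p$) by the scalar $\boldsymbol{\widetilde{f}}_j$. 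Here I would be explicit that the division is by a single softmax coordinate, not the whole output vector, so $\boldsymbol{z}$ indeed lands in $\mathbb{R}^p$ consistent with the stated Jacobian conventions. Reading off the $i$-th parameter coordinate simply restricts the gradient to $\nabla_{\boldsymbol{\theta}_i}\boldsymbol{\widetilde{f}}_j$, yielding $\boldsymbol{z}_i = -\,\nabla_{\boldsymbol{\theta}_i}\boldsymbol{\widetilde{f}}_j / \boldsymbol{\widetilde{f}}_j$.

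Finally I would compute the expectation over the random label. Because $j = \widetilde{y} \sim \mathcal{U}_k$ assigns probability $1/k$ to each of $\{1,\ldots,k\}$ independently of $\widetilde{\boldsymbol{x}}$, the model output $\boldsymbol{\widetilde{f}}$ (and hence $\nabla_{\boldsymbol{\theta}}\boldsymbol{\widetilde{f}}$) is fixed with respect to the randomness in $j$, so the expectation simply averages the $k$ possible summands: $\mathbb{E}_{j\sim\mathcal{U}_k}[\boldsymbol{z}_i] = -\tfrac{1}{k}\sum_{j=1}^{k} \nabla_{\boldsymbol{\theta}_i}\boldsymbol{\widetilde{f}}_j / \boldsymbol{\widetilde{f}}_j$, matching the stated formula. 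I do not anticipate a genuine obstacle, as every step is a direct calculation; the only point demanding care is bookkeeping of dimensions and the element-wise-division convention, and confirming that the softmax output is treated as a constant under the label randomness when taking the expectation.
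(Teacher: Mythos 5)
Your proposal is correct and follows essentially the same route as the paper's proof: identify the noise as the gradient of the auxiliary cross-entropy term, exploit the one-hot structure of $\boldsymbol{e}^{\widetilde{y}}$ to reduce it to $-\nabla_{\boldsymbol{\theta}}\boldsymbol{\widetilde{f}}_j/\boldsymbol{\widetilde{f}}_j$ with $j\sim\mathcal{U}_k$, and average over the $k$ equiprobable labels for the expectation. The only difference is cosmetic --- you collapse the loss to the scalar $-\log\boldsymbol{\widetilde{f}}_j$ before differentiating, whereas the paper applies the chain rule through the full Jacobian $\nabla_{\boldsymbol{\theta}}\widetilde{\boldsymbol{f}}$ first and then uses the one-hot vector to kill all but one term.
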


\noindent\textbf{The effects of SGD noise.} From Proposition \ref{prop:OND}, we find that our regularization induces SGD noise sampled from a uniform distribution over $\{- \nabla_{\boldsymbol{\theta}}\widetilde{\boldsymbol{f}}_j/\widetilde{\boldsymbol{f}}_j\}_{j=1}^{k}$. The effects of the noises can be analyzed from three aspects:

\begin{itemize}

\item Firstly, our regularization yields SGD noises with random direction, uniformly sampled from the gradients of the open-set noisy sample. In this way, the SGD noises make the training difficult to converge and help escape local minima \cite{bottou2012stochastic,hochreiter1997flat,keskar2016large,kleinberg2018alternative, neyshabur2017exploring}. Without the noises, the model would quickly converge to a local minimum because the optimization of parameters always follows the direction of gradient descent during the training.

\item Secondly, the noises $z$ are independently random to the gradients $\nabla_{\boldsymbol{\theta}}\ell(\boldsymbol{f},y)$ from the original training example as the noises are conducted with the open-set sample $\boldsymbol{\widetilde{x}}$ from $\mathcal{D}_{out}$. If closed-set samples are used here, the generated noises 
might be conflicted with the original gradients, leading to unstable training and limiting the improvement from noises. This argument is clearly supported by the experimental results in Fig.~\ref{fig:motivation} and Fig.~\ref{fig:open_ablation}.

\item Finally, the expected value of noise is not zero so the noise is biased. In particular, the expected value of noise enforces the model $f$ to produce more conservative predictions on Out-of-Distribution samples. Consequently, this feature 
would lead to performance improvement on OOD detection tasks as shown in Subsection \ref{OOD_exp}.  
\end{itemize}

To further demonstrate the importance of open-set samples in the auxiliary dataset, we conduct experiments with synthetic auxiliary datasets consisting of convex combinations $\boldsymbol{x}^{\prime}$ of open-set samples $\boldsymbol{\widetilde{x}}$  and closed-set samples $\boldsymbol{x}$ with a hyperparameter $\alpha$: $\boldsymbol{x}^{\prime} = (1-\alpha) \cdot \boldsymbol{\widetilde{x}} + \alpha \cdot \boldsymbol{x}$.
Here, we use 300K Random Images \cite{hendrycks2019oe} and CIFAR-5m \cite{ho2020denoising, nakkiran2020deep} as the open-set and closed-set dataset. More training details can be found in Appendix \ref{app:exp_settings}. The results in Fig.~\ref{fig:open_ablation} show that using open-set auxiliary dataset achieves better performance on robustness against inherent noises.

\noindent\textbf{Relations to label randomization methods. } Label randomization is also applied in some existing work in the context of deep learning\cite{chen2021noise, lukasik2020does, xie2016disturblabel}. For example, DisturbLabel \cite{xie2016disturblabel} intentionally generates incorrect labels on a small fraction of training data to improve generalization performance under the settings of clean labels, interpreted as an implicit model ensemble. It can be seen as using a part of training data as a closed-set auxiliary dataset. In such a way, 
DisturbLabel can be interpreted as a special case of our method with a closed-set auxiliary dataset.

To mitigate the issue of label noise, Stochastic Label Noise (SLN) \cite{chen2021noise} introduces a variant of SGD noise induced by adding dynamic and mean-zero perturbations on the labels of training dataset, while Label Smoothing \cite{lukasik2020does} uses a fixed and biased perturbation on the label. The noise gradients induced by SLN are given as follows:
\begin{align}
\label{eq:sln}
    \tilde{\nabla}_{\boldsymbol{\theta}} \ell(\boldsymbol{f}, y)=\nabla_{\boldsymbol{\theta}} \ell\left(\boldsymbol{f}, \boldsymbol{e}^{y}+\sigma_{y} \boldsymbol{z}_{y}\right)
\end{align}
\noindent where $\sigma_y > 0, \boldsymbol{z}_y \in \mathcal{R}^k$, $\boldsymbol{z}_y \sim \mathcal{N}\left(0, \boldsymbol{I}_{k\times k}\right)$.  
\begin{prop}[Proposition 3 in \citet{chen2021noise}]
\label{prop:SLN}
For the cross-entropy loss, Eq.~(\ref{eq:sln}) in SLN induces noise $\boldsymbol{z} \sim \mathcal{N}(0, \sigma^{2}_{y}M)$ on $\nabla_{\boldsymbol{\theta}}\ell(\boldsymbol{f},y)$, s.t., $\boldsymbol{M} \in \mathbb{R}^{p\times p}$ and $\boldsymbol{M}_{i,j} = \left(\frac{\nabla_{\boldsymbol{\theta}_{i}} \boldsymbol{f}}{\boldsymbol{f}}\right)^{T} \frac{\nabla_{\boldsymbol{\theta}_{j}} \boldsymbol{f}}{\boldsymbol{f}}, \forall i,j \in \left\{1, \ldots, p\right\}$, where $\frac{\cdot}{\boldsymbol{f}_j}$ denotes the element-wise division.
\end{prop}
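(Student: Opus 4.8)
The plan is to compute the SLN gradient in closed form, read off the induced noise as the difference between the perturbed-label gradient and the clean-label gradient, and then argue it is Gaussian by invoking the closure of the normal distribution under linear maps. The crucial structural fact I would exploit is that the cross-entropy loss is \emph{linear in its label argument}: since $\boldsymbol{f}$ is the softmax output, $\ell(\boldsymbol{f}, \boldsymbol{t}) = -\boldsymbol{t}^{\top}\log\boldsymbol{f}$ for any soft label $\boldsymbol{t}$. First I would apply the chain rule $\nabla_{\boldsymbol{\theta}}\ell = \nabla_{\boldsymbol{f}}\ell \cdot \nabla_{\boldsymbol{\theta}}\boldsymbol{f}$ together with $\nabla_{\boldsymbol{f}}\ell(\boldsymbol{f},\boldsymbol{t}) = -(\boldsymbol{t}/\boldsymbol{f})^{\top}$ (element-wise division) to obtain $\nabla_{\boldsymbol{\theta}}\ell(\boldsymbol{f},\boldsymbol{t}) = -\sum_{c=1}^{k} \boldsymbol{t}_c \, \nabla_{\boldsymbol{\theta}}\boldsymbol{f}_c / \boldsymbol{f}_c$, a quantity that is manifestly affine in $\boldsymbol{t}$.

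Next I would substitute the perturbed label $\boldsymbol{t} = \boldsymbol{e}^{y} + \sigma_y \boldsymbol{z}_y$ from Eq.~(\ref{eq:sln}). By linearity the gradient splits cleanly as
\begin{equation}
\widetilde{\nabla}_{\boldsymbol{\theta}}\ell(\boldsymbol{f},y) = \nabla_{\boldsymbol{\theta}}\ell(\boldsymbol{f},y) \;-\; \sigma_y \sum_{c=1}^{k} (\boldsymbol{z}_{y})_c\,\frac{\nabla_{\boldsymbol{\theta}}\boldsymbol{f}_c}{\boldsymbol{f}_c},
\end{equation}
so the induced noise is exactly $\boldsymbol{z} = -\sigma_y \sum_{c} (\boldsymbol{z}_{y})_c\, \nabla_{\boldsymbol{\theta}}\boldsymbol{f}_c/\boldsymbol{f}_c \in \mathbb{R}^{p}$. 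This exhibits $\boldsymbol{z}$ as a fixed linear image of the standard Gaussian vector $\boldsymbol{z}_y \sim \mathcal{N}(0, \boldsymbol{I}_{k\times k})$: writing $\boldsymbol{z} = A\boldsymbol{z}_y$, the $c$-th column of $A \in \mathbb{R}^{p\times k}$ is $-\sigma_y\, \nabla_{\boldsymbol{\theta}}\boldsymbol{f}_c/\boldsymbol{f}_c$. Because $\boldsymbol{f}$ and $\nabla_{\boldsymbol{\theta}}\boldsymbol{f}$ are evaluated at the current $\boldsymbol{\theta}$ and carry no randomness at this step, $A$ is deterministic, and a linear transformation of a Gaussian is Gaussian with mean $A\,\mathbb{E}[\boldsymbol{z}_y] = 0$.

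It then remains to identify the covariance $A A^{\top} = \sigma_y^2 \boldsymbol{M}$. Computing the $(i,j)$ entry directly and using $\mathbb{E}[(\boldsymbol{z}_{y})_c(\boldsymbol{z}_{y})_{c'}] = \delta_{cc'}$ gives $\mathrm{Cov}(\boldsymbol{z})_{i,j} = \sigma_y^2 \sum_{c} \frac{(\nabla_{\boldsymbol{\theta}_i}\boldsymbol{f})_c}{\boldsymbol{f}_c}\cdot\frac{(\nabla_{\boldsymbol{\theta}_j}\boldsymbol{f})_c}{\boldsymbol{f}_c}$, which is precisely $\sigma_y^2 \left(\frac{\nabla_{\boldsymbol{\theta}_i}\boldsymbol{f}}{\boldsymbol{f}}\right)^{\top}\frac{\nabla_{\boldsymbol{\theta}_j}\boldsymbol{f}}{\boldsymbol{f}} = \sigma_y^2 \boldsymbol{M}_{i,j}$, matching the claimed form. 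There is no genuinely hard step here: the argument is a chain-rule computation followed by Gaussian closure under linear maps. The only points demanding care are bookkeeping ones, namely keeping the Jacobian shapes consistent with the conventions fixed before the propositions (so that $\nabla_{\boldsymbol{\theta}_i}\boldsymbol{f}\in\mathbb{R}^{k}$ and the element-wise division is applied coordinatewise in the class index $c$), and correctly reading the noise as the per-step deviation $\widetilde{\nabla}_{\boldsymbol{\theta}}\ell - \nabla_{\boldsymbol{\theta}}\ell$ rather than the full perturbed gradient. I therefore expect the matching of the covariance to the stated outer-product structure of $\boldsymbol{M}$ to be the only place where an indexing slip could occur.
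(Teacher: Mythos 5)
Your derivation is correct, but note a peculiarity of the comparison: this paper never proves Proposition~\ref{prop:SLN} itself. The proposition is imported verbatim from \citet{chen2021noise} (it is their Proposition~3), and the appendix deliberately skips it --- the proofs section proves Proposition~\ref{prop:OND}, then advances the proposition counter and proves Proposition~\ref{prop:OE} only. So there is no in-paper proof to match against; the fair comparison is with the paper's proof of the sibling Proposition~\ref{prop:OND}, and your argument follows exactly that template: apply the chain rule $\nabla_{\boldsymbol{\theta}}\ell = \nabla_{\boldsymbol{f}}\ell\cdot\nabla_{\boldsymbol{\theta}}\boldsymbol{f}$ with $\nabla_{\boldsymbol{f}}\ell(\boldsymbol{f},\boldsymbol{t}) = -(\boldsymbol{t}/\boldsymbol{f})^{\top}$, exploit linearity of the cross-entropy in the label argument to isolate the induced noise, then compute its distribution. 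The only step beyond the Proposition~\ref{prop:OND} proof is the Gaussian part, and you handle it correctly: writing $\boldsymbol{z}=A\boldsymbol{z}_y$ with the $c$-th column of $A$ equal to $-\sigma_y\nabla_{\boldsymbol{\theta}}\boldsymbol{f}_c/\boldsymbol{f}_c$, noting $A$ is deterministic conditional on the current $\boldsymbol{\theta}$ and sample, and invoking closure of Gaussians under linear maps gives mean $0$ and covariance $(AA^{\top})_{i,j}=\sigma_y^2\sum_{c}\frac{(\nabla_{\boldsymbol{\theta}_i}\boldsymbol{f})_c(\nabla_{\boldsymbol{\theta}_j}\boldsymbol{f})_c}{\boldsymbol{f}_c^2}=\sigma_y^2\boldsymbol{M}_{i,j}$, which is precisely the stated outer-product structure. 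Your proof is therefore a complete, self-contained justification of a result the paper only cites, and it is consistent with the Jacobian shape conventions fixed before the propositions.
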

From Proposition \ref{prop:SLN}, we show that SLN induces SGD noises that obey a Gaussian distribution, where the standard deviation can be adjusted by the hyperparameter $\sigma_{y}$.  Therefore, SLN and our ODNL can be considered as different variants of SGD noise, helping the network converge to a flat minimum, which is expected
to provide better generalization performance \cite{bottou2012stochastic,hochreiter1997flat,keskar2016large,kleinberg2018alternative, neyshabur2017exploring}. The effects are explicitly presented in Fig.~\ref{fig:loss_sln} and Fig.~\ref{fig:loss_ours}. Moreover, the results in Fig.~\ref{fig:sln_compare} illustrate that our method could further improve the performance of SLN\footnote{We use the official implementation:  \url{https://github.com/chenpf1025/SLN}.} under various types of label noise, showing the complementarity between SLN and our method. 

\begin{figure}
     \centering
     \begin{subfigure}[b]{0.23\textwidth}
         \centering
         \includegraphics[width=\textwidth]{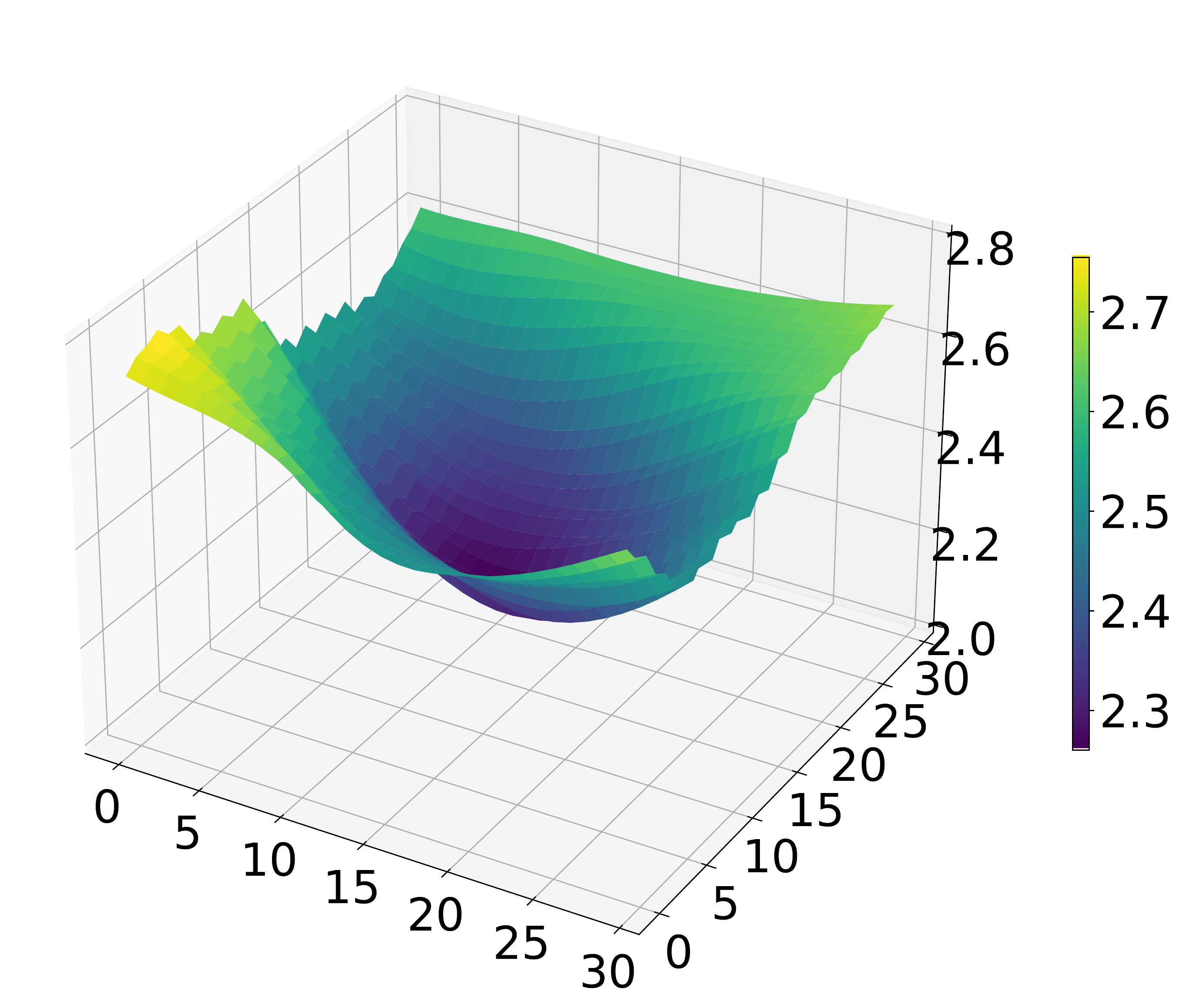}
         \caption{Standard}
         \label{fig:loss_standard}
     \end{subfigure}
     \hfill
     \begin{subfigure}[b]{0.23\textwidth}
         \centering
         \includegraphics[width=\textwidth]{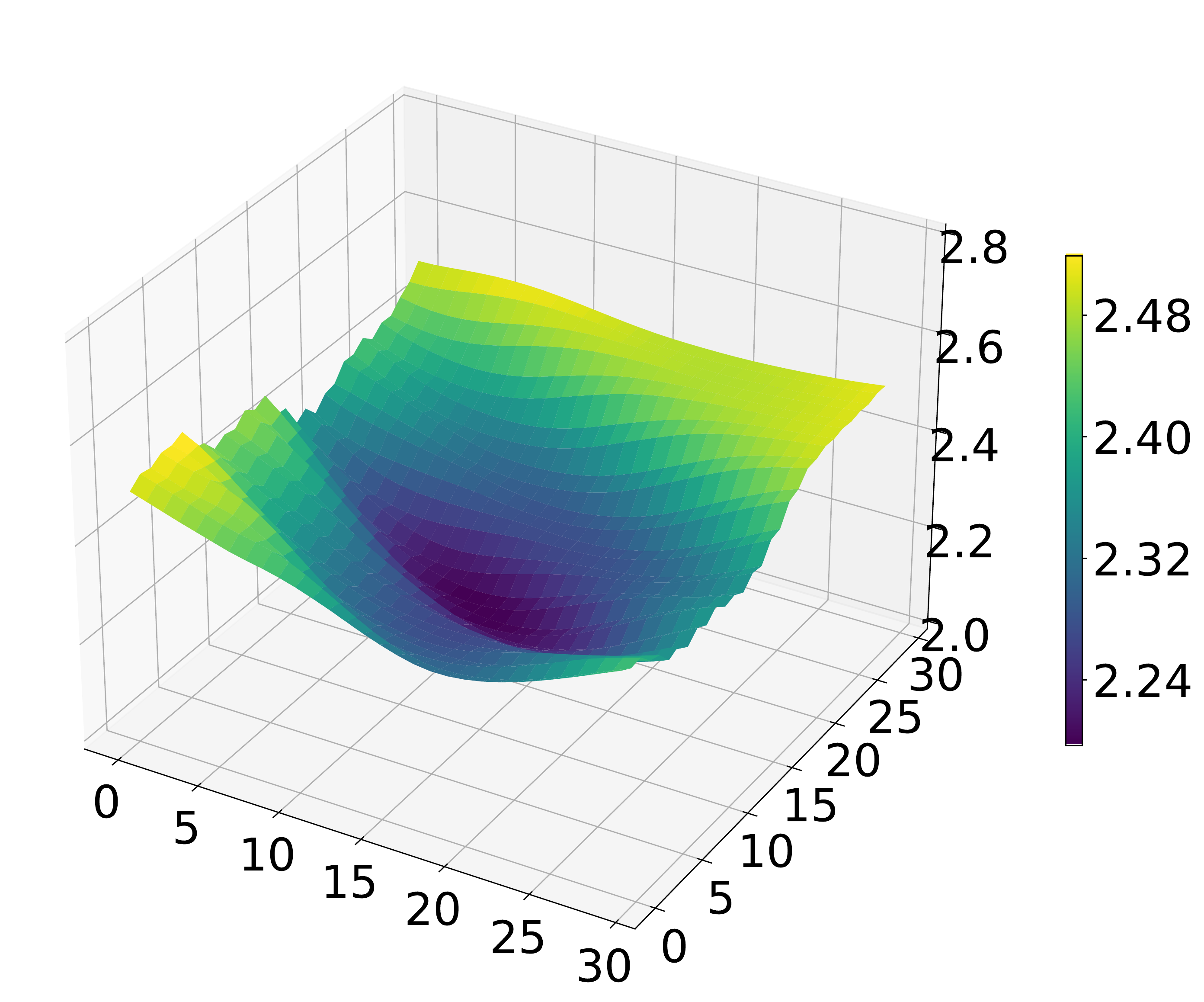}
         \caption{OE}
         \label{fig:loss_oe}
     \end{subfigure}
     \hfill
     \begin{subfigure}[b]{0.23\textwidth}
         \centering
         \includegraphics[width=\textwidth]{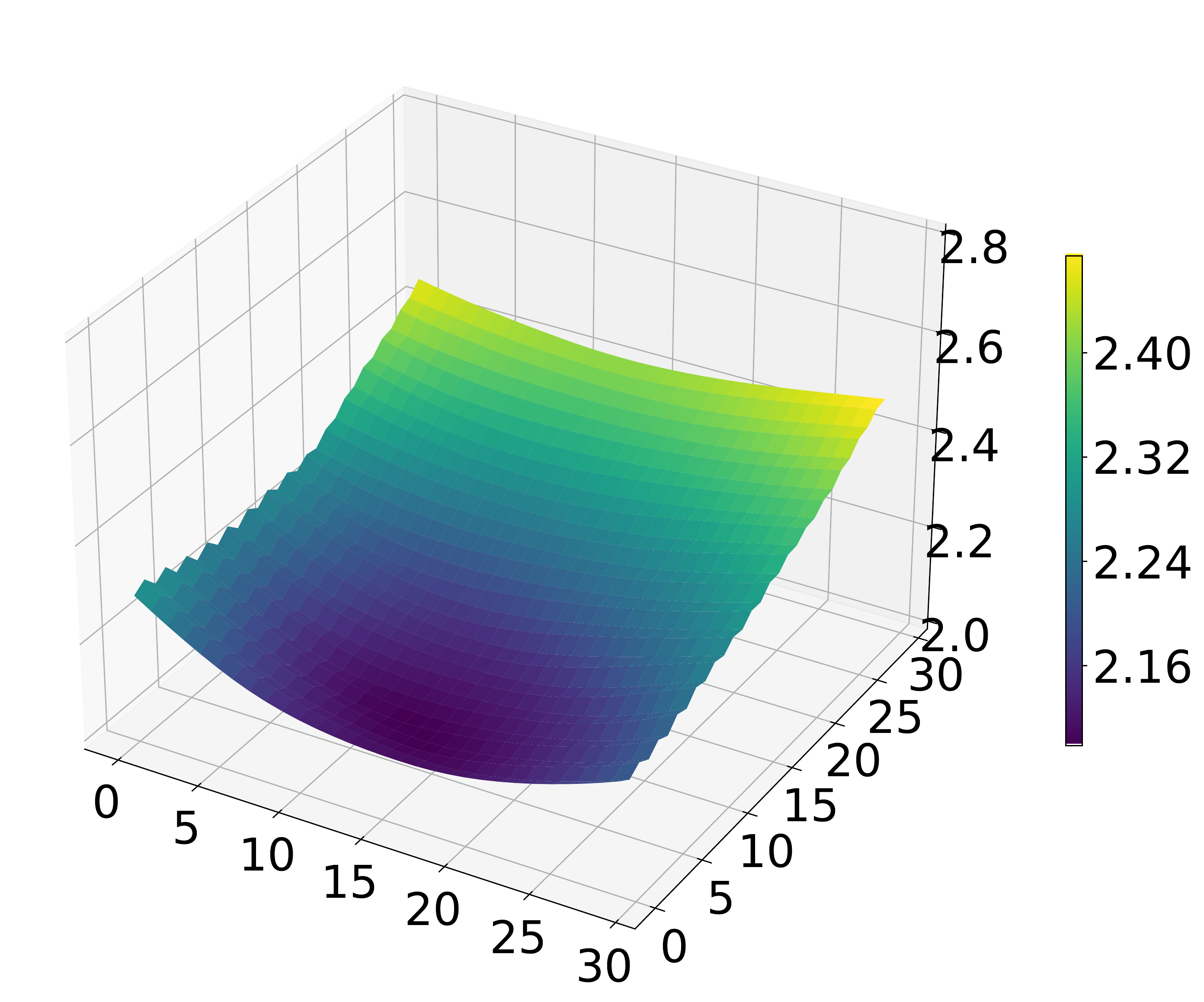}
         \caption{SLN}
         \label{fig:loss_sln}
     \end{subfigure}
     \hfill
          \begin{subfigure}[b]{0.23\textwidth}
         \centering
         \includegraphics[width=\textwidth]{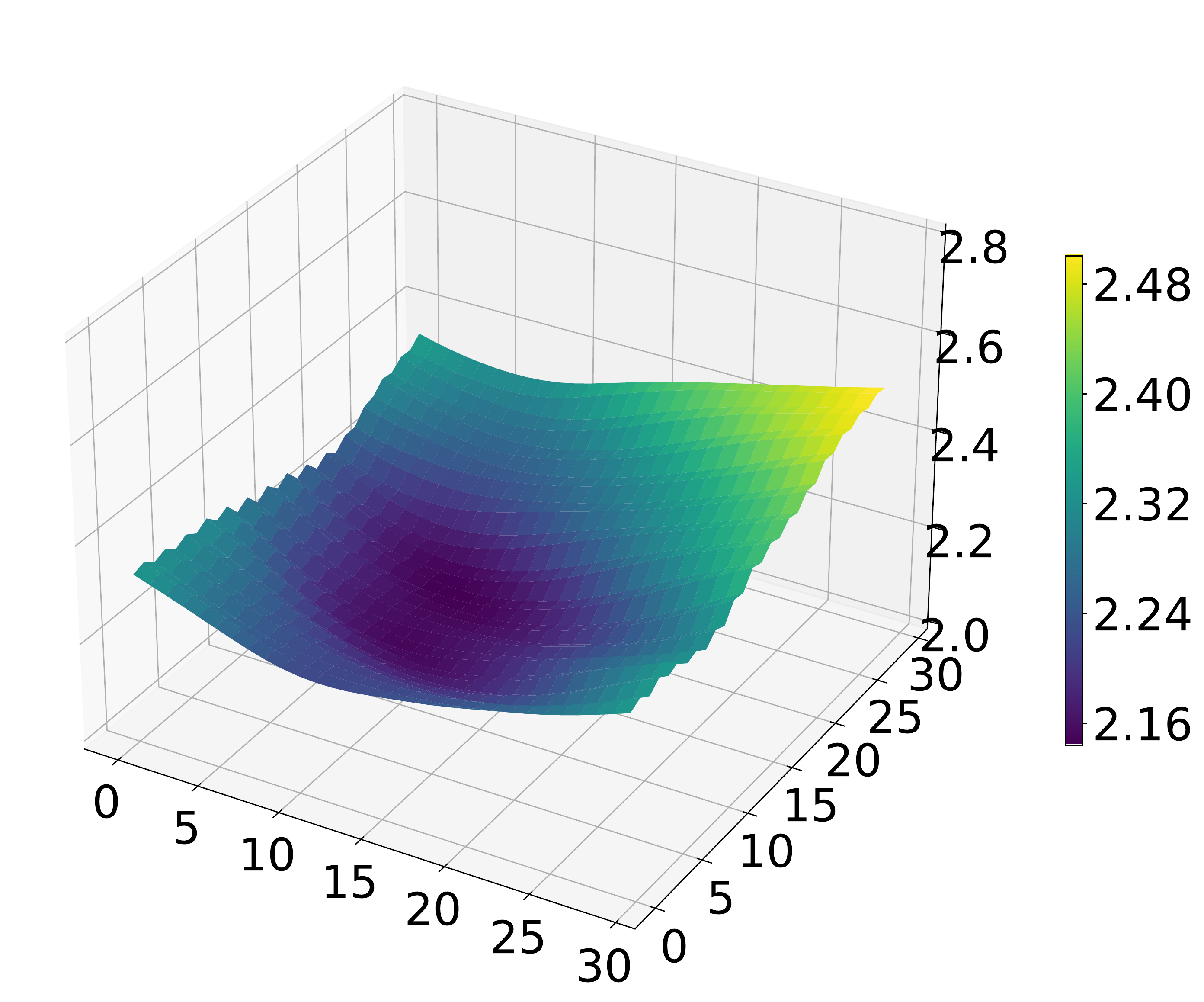}
         \caption{Ours}
         \label{fig:loss_ours}
     \end{subfigure}
    \caption{Loss landscapes around the local minimum of models trained on CIFAR-10 with symmetric-40\% noisy labels. We compare their sharpness with the same-scale z-axis and show the loss distribution by drawing color bars separately. We show that our method and SLN lead to a flat minimum, while the models trained by Standard and OE converge to a sharp minimum.}
    \label{loss_landscape}
\end{figure}

\begin{figure}
     \centering
     \begin{subfigure}[b]{0.3\textwidth}
         \centering
         \includegraphics[width=\textwidth]{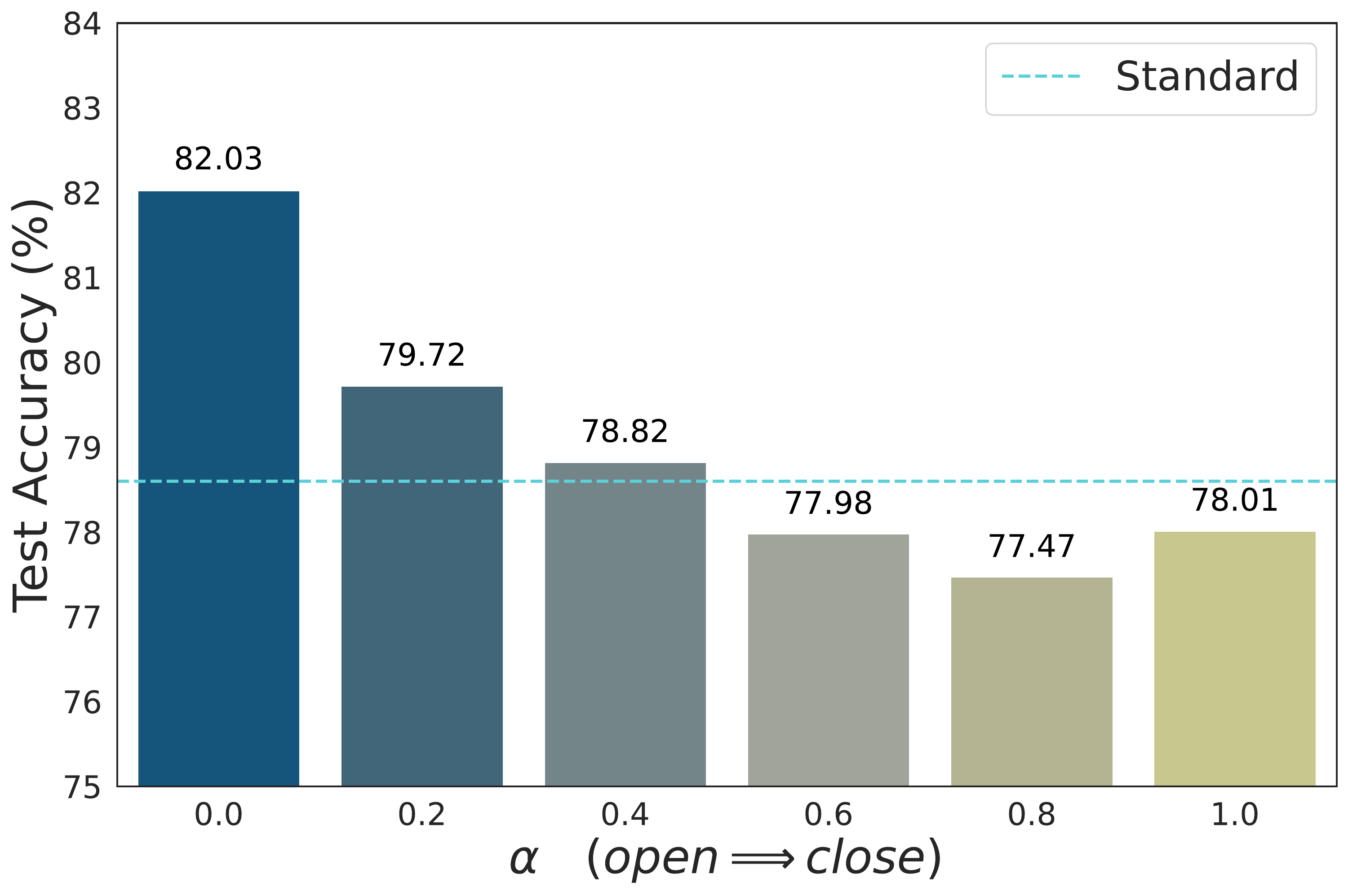}
         \caption{The importance of open set}
         \label{fig:open_ablation}
     \end{subfigure}
     \hfill
     \begin{subfigure}[b]{0.3\textwidth}
         \centering
         \includegraphics[width=\textwidth]{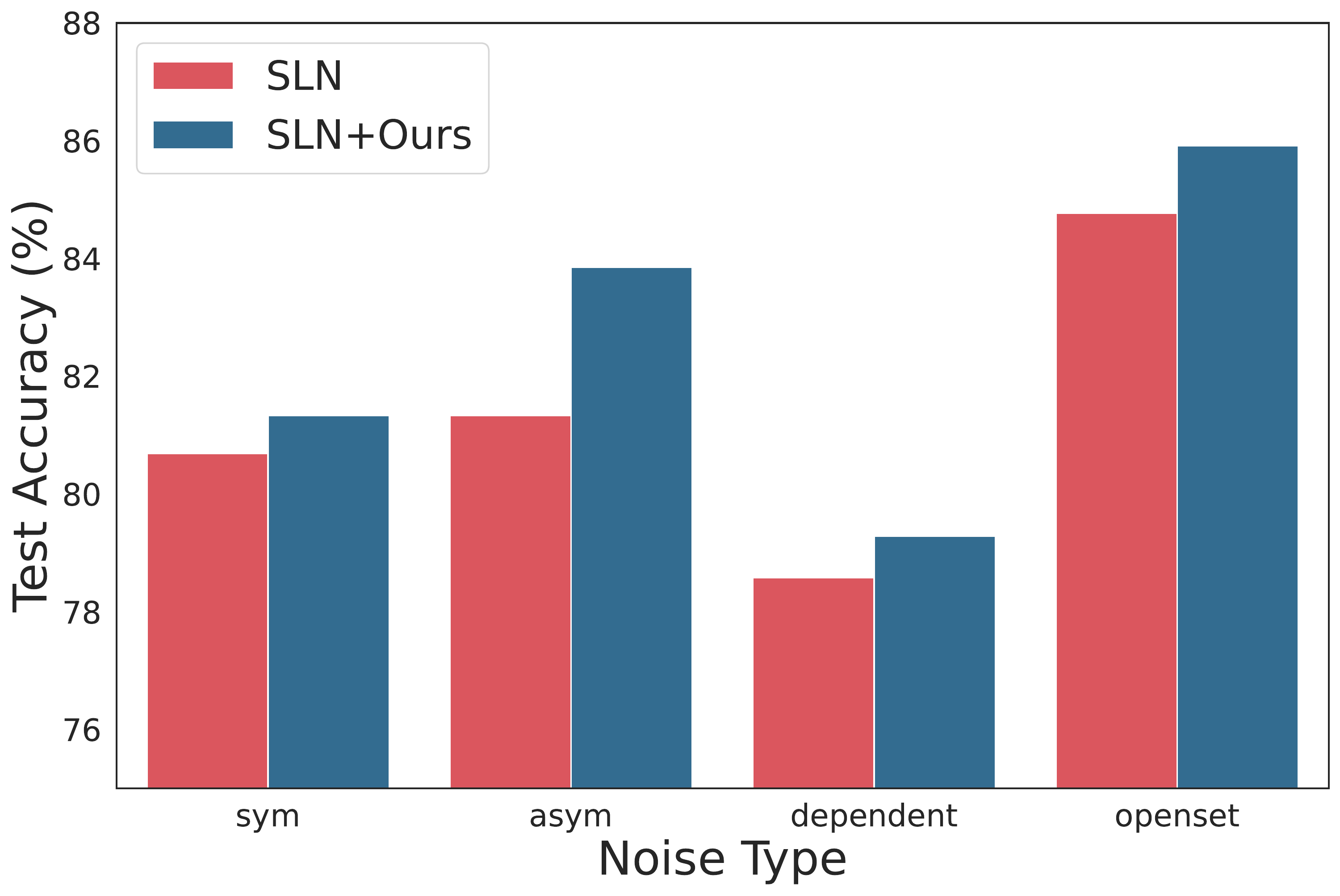}
         \caption{Our method can improve SLN}
         \label{fig:sln_compare}
     \end{subfigure}
     \hfill
     \begin{subfigure}[b]{0.3\textwidth}
         \centering
         \includegraphics[width=\textwidth]{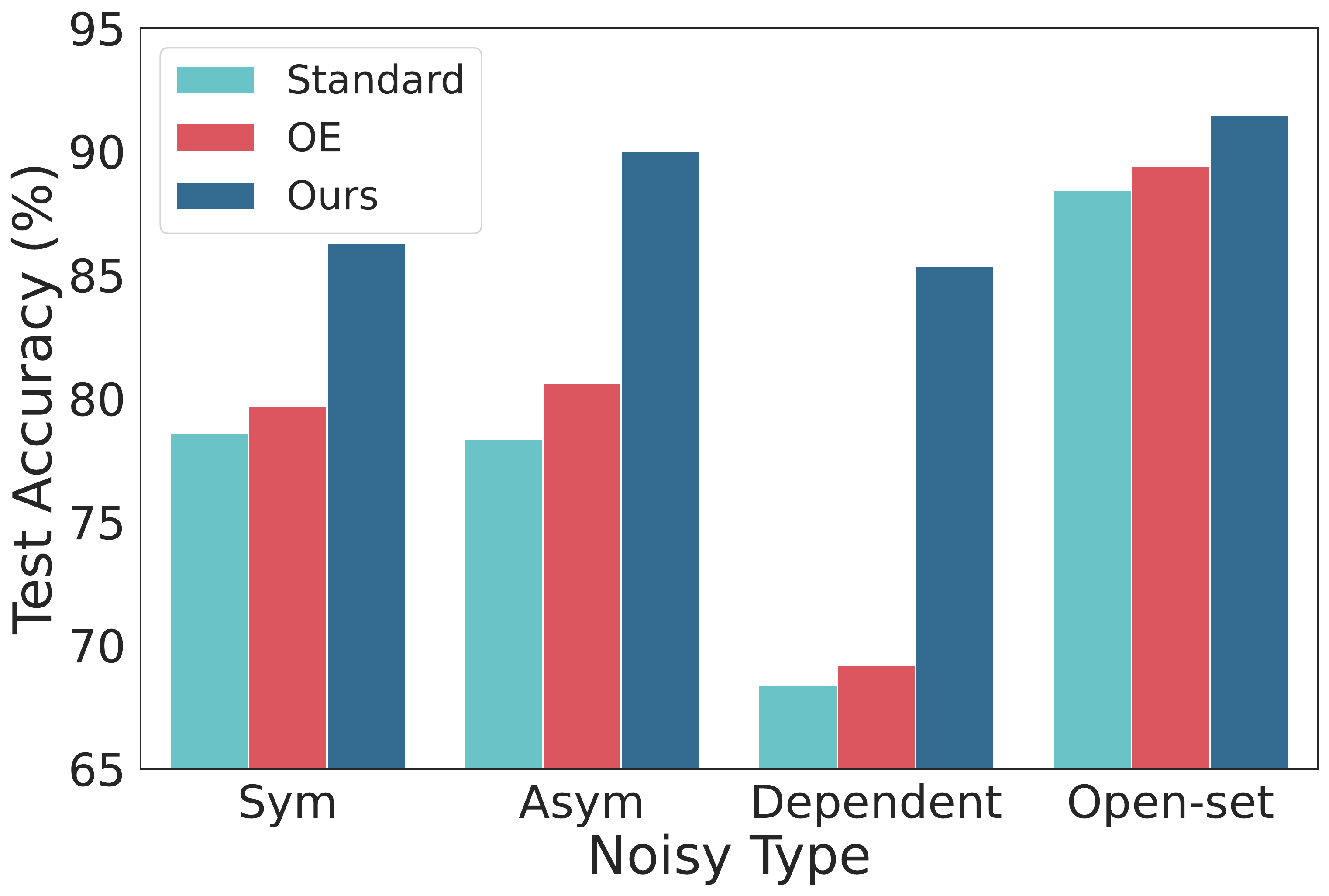}
         \caption{Compare with OE.}
         \label{fig:oe_compare}
     \end{subfigure}
     \caption{(a) Test accuracy (\%) of ODNL ($\eta=1$) on CIFAR-10 with symmetric-40\% inherent noisy labels using different synthetic auxiliary datasets: the larger the $\alpha$ is, the closer the auxiliary sample distribution $\mathcal{D}_{\mathrm{out}}$ is to the origin sample distribution $\mathcal{D}_{\mathrm{train}}$. (b) Comparison of performances on CIFAR-10 with different types of inherent noisy labels using SLN and SLN+Ours. Here we follow the settings of SLN and the training details are provided in Appendix \ref{app:exp_settings}. (c) Comparison of performances on CIFAR10 with different types of inherent noisy labels using OE and Ours.}
     \label{fig:analyze}
\end{figure}

\noindent\textbf{Relations to Outlier Exposure. } Outlier Exposure (OE) \cite{hendrycks2019oe} is an effective method designed for OOD detection tasks. Specifically, OE also makes use of a large, unlabeled auxiliary dataset sampled from $\mathcal{D}_{\text{out}}$, and regularizes the softmax probabilities to be a uniform distribution for out-of-distribution data in the auxiliary dataset. In such manner, OE train the model to discover signals and learn heuristics to detect whether a query is sampled from $\mathcal{D}_{\mathrm{train}}$ or $\mathcal{D}_{\mathrm{out}}$. The training objective is to minimize: $\mathbb{E}_{(\boldsymbol{x}, y) \sim \mathcal{D}_{\mathrm{train}}}\left[\ell(f(\boldsymbol{x}; \boldsymbol{\theta}), y)\right]+\lambda \cdot \mathbb{E}_{\boldsymbol{\widetilde{x}} \sim \mathcal{D}_{\mathrm{out}}}\left[ - \frac{1}{k}\cdot\sum_{i=1}^{k}\log f_i\left(\widetilde{\boldsymbol{x}};\boldsymbol{\theta}\right)\right]$.
\begin{prop}
\label{prop:OE}
For the cross-entropy loss, each open-set sample in OE induces bias $\boldsymbol{z} = -\frac{1}{k}\sum_{j=1}^{k}\frac{\nabla_{\boldsymbol{\theta}}\boldsymbol{\widetilde{f}}_j}{\boldsymbol{\widetilde{f}}_j}$ on $\nabla_{\boldsymbol{\theta}}\ell(\boldsymbol{f},y)$, s.t., $\boldsymbol{z} \in \mathbb{R}^{p}$, where $\frac{\cdot}{\boldsymbol{\widetilde{f}}_j}$ denotes the element-wise division.
\end{prop}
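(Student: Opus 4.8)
The plan is to compute directly the gradient contributed by OE's regularization term for a single open-set sample and read off the additive perturbation it places on the clean gradient $\nabla_{\boldsymbol{\theta}}\ell(\boldsymbol{f},y)$. Writing $\boldsymbol{\widetilde{f}} = f(\widetilde{\boldsymbol{x}};\boldsymbol{\theta})$, the per-sample OE regularizer is $\ell_{\mathrm{OE}}(\widetilde{\boldsymbol{x}}) = -\frac{1}{k}\sum_{j=1}^{k}\log \boldsymbol{\widetilde{f}}_j$, i.e.\ the cross-entropy between the softmax output and the uniform soft target $\tfrac{1}{k}\mathbf{1}$. Since the total gradient is $\nabla_{\boldsymbol{\theta}}\ell(\boldsymbol{f},y) + \nabla_{\boldsymbol{\theta}}\ell_{\mathrm{OE}}(\widetilde{\boldsymbol{x}})$ (absorbing $\lambda$ just as the excerpt omits $\eta$), the perturbation $\boldsymbol{z}$ is exactly $\nabla_{\boldsymbol{\theta}}\ell_{\mathrm{OE}}$.

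First I would differentiate term by term: by linearity, $\nabla_{\boldsymbol{\theta}}\ell_{\mathrm{OE}} = -\frac{1}{k}\sum_{j=1}^{k}\nabla_{\boldsymbol{\theta}}\log \boldsymbol{\widetilde{f}}_j$. Then I would apply the scalar chain rule $\nabla_{\boldsymbol{\theta}}\log \boldsymbol{\widetilde{f}}_j = \nabla_{\boldsymbol{\theta}}\boldsymbol{\widetilde{f}}_j / \boldsymbol{\widetilde{f}}_j$, using the element-wise-division convention already fixed in the statement and matching the Jacobian shapes declared before Proposition \ref{prop:OND} (with $\nabla_{\boldsymbol{\theta}}\boldsymbol{\widetilde{f}}_j \in \mathbb{R}^{1\times p}$ transposed into $\mathbb{R}^{p}$). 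Summing yields $\boldsymbol{z} = -\frac{1}{k}\sum_{j=1}^{k}\frac{\nabla_{\boldsymbol{\theta}}\boldsymbol{\widetilde{f}}_j}{\boldsymbol{\widetilde{f}}_j}$, which is the claimed expression. This is a two-line computation with essentially no analytic content.

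The one point worth emphasizing --- and the only place any ``obstacle'' resides --- is conceptual rather than technical: unlike the perturbation in Proposition \ref{prop:OND}, this $\boldsymbol{z}$ carries no randomness, because OE uses the full uniform soft target $\tfrac{1}{k}\mathbf{1}$ in place of a single sampled label $\widetilde{y}\sim\mathcal{U}_k$. I would therefore frame the result as showing that OE's deterministic bias is precisely the expectation over $j\sim\mathcal{U}_k$ of the stochastic ODNL noise $-\nabla_{\boldsymbol{\theta}}\boldsymbol{\widetilde{f}}_j/\boldsymbol{\widetilde{f}}_j$ from Proposition \ref{prop:OND}; indeed the expected-value expression stated there is term-for-term identical to $\boldsymbol{z}$. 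This makes the comparison with ODNL immediate: OE keeps the bias but discards the random-direction component, so it lacks the flat-minimum-seeking stochasticity that the paper attributes to ODNL, while retaining the conservative-prediction (OOD-detection) effect of the bias.
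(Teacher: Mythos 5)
Your proposal is correct and is essentially the same computation as the paper's proof: both simply apply the chain rule to $\ell_{\mathrm{OE}} = -\frac{1}{k}\sum_{j=1}^{k}\log \boldsymbol{\widetilde{f}}_j$ (the paper factors it as $\nabla_{\boldsymbol{\widetilde{f}}}\ell_{\mathrm{OE}}\cdot\nabla_{\boldsymbol{\theta}}\boldsymbol{\widetilde{f}}$, you differentiate term by term, which is the same thing). Your closing remark identifying the OE bias as the expectation of the ODNL noise from Proposition~\ref{prop:OND} is also exactly the observation the paper makes in the discussion following the proposition.
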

Comparing Proposition \ref{prop:OE} with Proposition \ref{prop:OND}, our method should have similar regularization effect to OE on OOD examples, because the expectation of the SGD noises from ODNL is equal to the bias induced by OE. However, OE cannot improve the robustness against noisy labels as it does not induce dynamic noises so that the optimization of parameters always follows the direction of gradient descent. The difference is empirically shown in Fig.~\ref{fig:oe_compare} and we further verify it in Subsection \ref{OOD_exp}.


\section{Experiments}
In this section, we verify the effectiveness of our method on three benchmark datasets: CIFAR-10/100 and Clothing1M. We show that ODNL could not only significantly improve the robustness against various types of inherent noisy labels in standard training, but also enhance the performance of existing robust training techniques. Then we perform a sensitivity analysis to validate the effect of $\eta$. Finally, we conduct experiments to evaluate the performance of ODNL on OOD detection task.

\begin{table*}[!t]
\centering
\caption{Average test accuracy (\%) with standard deviation on CIFAR-10 under various types of noisy labels (over 5 trials). The bold indicates the improved results by integrating our regularization. }
\label{tab:cifar10_results}
\renewcommand\arraystretch{0.9}
\resizebox{1.00\textwidth}{!}{
\setlength{\tabcolsep}{3mm}{
\begin{tabular}{cccccc}
\toprule
Method & Sym-20\% & Sym-50\% & Asymmetric & Dependent & Open-set  \\
\midrule
 Standard& 86.93$\pm$0.49&  69.48$\pm$0.58 & 78.34$\pm$0.96 & 68.39$\pm$0.65 & 88.45$\pm$0.57\\
\textbf{+ ODNL (Ours)}& \textbf{91.06$\pm$0.64}&  \textbf{82.50$\pm$0.58} & \textbf{90.00$\pm$0.35} & \textbf{85.37$\pm$0.32} & \textbf{91.47$\pm$0.39}\\
\midrule
Decoupling& 88.31$\pm$0.39&  80.81$\pm$0.85 & 84.85$\pm$0.22 & 82.44$\pm$0.54 & 84.69$\pm$0.46\\
\textbf{+ ODNL (Ours)}& \textbf{89.70 $\pm$0.74}& \textbf{81.48$\pm$0.37} & \textbf{87.36$\pm$0.68} & \textbf{84.11$\pm$0.51} & \textbf{86.22$\pm$0.57}\\
\midrule
F-correction& 84.95$\pm$0.27&  72.11$\pm$0.15 & 84.46$\pm$0.96 & 72.44$\pm$0.58 & 85.54$\pm$0.86\\
\textbf{+ ODNL (Ours)}& \textbf{88.99$\pm$0.23}&  \textbf{82.42$\pm$0.29} & \textbf{88.50$\pm$0.15} & \textbf{84.70$\pm$0.39} & \textbf{88.92$\pm$0.65}\\
\midrule
PHuber-CE&  90.92$\pm$0.28 & 74.07$\pm$0.49 & 81.26$\pm$0.65 & 75.07$\pm$1.21 & 88.35$\pm$0.21\\
\textbf{+ ODNL (Ours)}& \textbf{92.05$\pm$0.36}&  \textbf{82.63$\pm$0.28} & \textbf{88.24$\pm$0.43} & \textbf{84.11$\pm$0.45} & \textbf{90.85$\pm$0.29}\\
\midrule
Co-teaching& 91.72$\pm$0.35&  86.65$\pm$0.43 & 87.56$\pm$0.36 & 88.64$\pm$0.79 & 89.91$\pm$0.58\\
\textbf{+ ODNL (Ours)}& \textbf{93.07$\pm$0.79}&  \textbf{89.79$\pm$0.76} & \textbf{88.48$\pm$0.37} & \textbf{90.99$\pm$0.28} & \textbf{91.88$\pm$0.53}\\
\midrule
JoCoR \footnotesize{($\lambda=0.5$)}& 92.68$\pm$0.33&  88.36$\pm$0.64 & 84.06$\pm$0.37 & 90.11$\pm$0.19 & 88.80$\pm$0.54\\
\textbf{+ ODNL (Ours)}& \textbf{93.85$\pm$0.69}&  \textbf{91.17$\pm$0.52} & \textbf{88.10$\pm$0.69} & \textbf{92.09$\pm$0.84} & \textbf{91.73$\pm$0.73}\\
\bottomrule
\end{tabular}
}
}

\end{table*}

\subsection{Setups} \label{setups}
We comprehensively verify the utility of ODNL\footnote{The code is published on \url{https://github.com/hongxin001/ODNL}} on different types of label noise, including symmetric noise, asymmetric noise\cite{zhang2018generalized}, instance-dependent noise \cite{chen2020beyond} and open-set noise \cite{wang2018iterative} synthesized on CIFAR-10/100 and real-world noise on Clothing1M \cite{xiao2015learning}. Symmetric noise assumes each label has the same probability of flipping to any other classes. We uniformly flip the label to other classes with an overall probability 20\% and 50\%. Asymmetric noise assumes labels might be only flipped to similar classes \cite{chen2021noise, patrini2017making, zhang2018generalized}. Noisy labels are generated by mapping TRUCK $\rightarrow$ AUTOMOBILE, BIRD $\rightarrow$ AIRPLANE, DEER $\rightarrow$ HORSE, and CAT $\leftrightarrow$ DOG with probability 40\% for CIFAR-10. For CIFAR-100, we flip each class into the next circularly with probability 40\%. Instance-dependent noise assumes the mislabeling probability is dependent on each instance's input features \cite{chen2020beyond, chen2021noise, xia2020parts}. We use the instance-dependent noise from PDN \cite{xia2020parts} with a noisy rate 40\%, where the noise is synthesized based on the DNN prediction error. Open-set noise contains samples that do not belong to the label set considered in the classification task \cite{wang2018iterative}. We generate open-set noises on CIFAR-10 by randomly replacing 40\% of its training images with images from CIFAR-100.

We perform training with WRN-40-2 \cite{zagoruyko2016wide} on CIFAR-10 and CIFAR-100. 
The network is trained for 200 epochs using SGD with a momentum of 0.9, a weight decay of 0.0005. In each iteration, both the batch sizes of the original dataset and the open-set auxiliary dataset are set as 128. We set the initial learning rate as 0.1, and reduce it by a factor of 10 after 80 and 140 epochs. We use 5k noisy samples as the validation to tune the hyperparameter $\eta$ in $\{0.1, 0.5, 1, 2.5, 5\}$, then train the model on the full training set and report the average test accuracy in the last 5 epochs. All experiments are repeated five times with different seeds and more training details can be found in Appendix \ref{app:exp_settings}. We use 300K Random Images \cite{hendrycks2019oe} as the open-set auxiliary dataset and more discussions on the different choices of open-set auxiliary dataset are presented in Appendix \ref{app:results}.


\subsection{CIFAR-10 and CIFAR-100} \label{cifar10}

\begin{table*}[!t]
\centering
\renewcommand\arraystretch{0.5}
\caption{Average test accuracy (\%) with standard deviation on CIFAR-100 under various types of noisy labels (over 5 trials). The bold indicates the improved results by integrating our regularization.}
\label{tab:cifar100_results}
\resizebox{1.00\textwidth}{!}{
\setlength{\tabcolsep}{4mm}{
\begin{tabular}{ccccc}
\toprule
Method & Sym-20\% & Sym-50\% & Asymmetric & Dependent \\
\midrule
 Standard& 64.87$\pm$0.81&  48.55$\pm$0.65 & 48.77$\pm$0.86 & 53.94$\pm$0.52 \\
\textbf{+ ODNL (Ours)}& \textbf{68.82$\pm$0.24}&  \textbf{54.08$\pm$0.54} & \textbf{58.61$\pm$0.25} & \textbf{62.45$\pm$0.41}\\
\midrule
Decoupling& 64.26$\pm$0.23&  49.09$\pm$0.81 & 49.92$\pm$0.71 & 55.17$\pm$0.34  \\
\textbf{+ ODNL (Ours)}&\textbf{66.86$\pm$0.35}&  \textbf{51.89$\pm$0.47} & \textbf{57.47$\pm$0.49} & \textbf{58.52$\pm$0.28} \\
\midrule
F-correction&62.05$\pm$0.35&  52.27$\pm$0.15 & 54.45$\pm$0.94 & 57.47$\pm$0.26 \\
\textbf{+ ODNL (Ours)}& \textbf{65.53$\pm$0.28}&  \textbf{57.12$\pm$0.65} & \textbf{57.89$\pm$0.55} & \textbf{60.95$\pm$0.81}\\
\midrule
PHuber-CE& 71.27$\pm$0.74&  61.37$\pm$0.28 & 48.14$\pm$0.35 & 64.80$\pm$0.32  \\
\textbf{+ ODNL (Ours)}& \textbf{72.27$\pm$0.53}&  \textbf{63.33$\pm$0.48} & \textbf{53.63$\pm$0.72} & \textbf{66.54$\pm$0.60}\\
\midrule
Co-teaching& 72.35$\pm$0.60&  65.21$\pm$0.88 & 54.16$\pm$0.17 & 67.85$\pm$0.82  \\
\textbf{+ ODNL (Ours)}& \textbf{73.12$\pm$0.20}&  \textbf{65.46$\pm$0.49} & \textbf{54.92$\pm$0.31} & \textbf{68.37$\pm$0.12} \\
\midrule
JoCoR \footnotesize{($\lambda=0.5$)}& 73.44$\pm$0.19&  67.53$\pm$0.79 & 57.02$\pm$0.32 & 70.01$\pm$0.26  \\
\textbf{+ ODNL (Ours)}& \textbf{74.48$\pm$0.16}&  \textbf{68.07$\pm$0.47} & \textbf{58.44$\pm$0.34} & \textbf{71.28$\pm$0.31}\\
\bottomrule
\end{tabular}
}
}
\end{table*}


On CIFAR-10 and CIFAR-100, we verify that ODNL can boost existing robust training methods by integrating ODNL with the following methods: 1) Decoupling \cite{malach2017decoupling}, which updates the parameters only using instances that have different predictions from two classifiers. 2) F-correction \cite{patrini2017making}, which corrects the prediction by the label transition matrix. As the setting in F-correction, we first train a standard network to estimate the transition matrix Q. 3) PHuber-CE \cite{menon2020can}, which uses a composite loss-based gradient clipping, a variation of standard gradient clipping for label noise robustness. 4) Co-teaching \cite{han2018co}, which trains two networks simultaneously and cross-updates parameters of peer networks. 5) JoCoR \cite{wei2020combating}, which trains two networks simultaneously and reduces the divergence between the two classifiers by explicit regularization. 

\begin{wrapfigure}{r}{0.4\textwidth}
    \centering
    \includegraphics[width=0.4\textwidth]{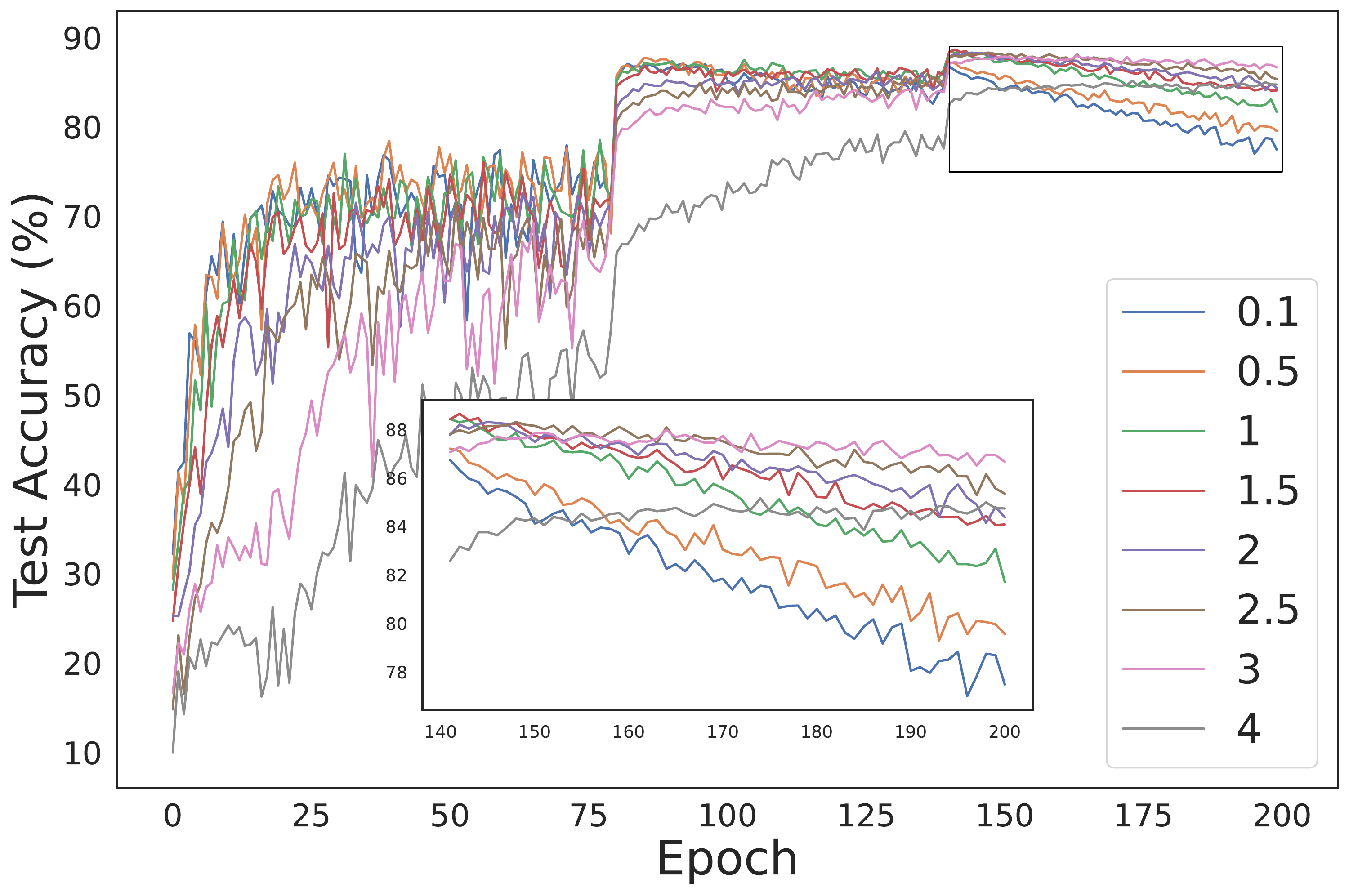}
    \caption{Results of sensitivity analysis on CIFAR-10 with different $\eta$.}
    \label{fig:eta_analysis}
\end{wrapfigure}
The average test accuracy with WRN-40-2 \cite{zagoruyko2016wide} are reported in Table \ref{tab:cifar10_results} and Table \ref{tab:cifar100_results}. The results show that incorporating our regularization into existing robust training methods consistently improves their performance against various types of noisy labels. In particular, we can observe that our method can boost almost all types of existing methods, including robust loss function, sample selection, loss correction, etc., showing that our regularization based upon an open-set auxiliary dataset is a complementary method to existing robust algorithms. More intriguingly, while most existing methods do not bring benefits to the robustness against open-set noisy labels in the training set of CIFAR-10, our regularization could still enhance the performance of all existing methods in this case.

To further illustrate the influence of $\eta$, we present a sensitivity analysis on CIFAR-10 with symmetric-40\% noisy labels in Fig.~\ref{fig:eta_analysis}. Specifically, we highlight the differences in the trend of test accuracy after the second decay of learning rate at the 140th epoch. We can observe that with a large $\eta$, the decreasing trend of test accuracy caused by inherent noisy labels is nearly eliminated. The result verifies that our regularization is an effective method to reduce the overfitting of inherent noisy labels.

\begin{table*}[!t]
\footnotesize
\centering
\renewcommand\arraystretch{1}
\caption{ Test accuracy (\%) on Clothing1M with pretrained ResNet-50. All experiments are implemented based on DivideMix's official implementation. The bold indicates the improved results.} 
\label{tab:Clothing1m}
\resizebox{1.00\textwidth}{!}{
\setlength{\tabcolsep}{1mm}{
\begin{tabular}{ccccccccc}
\toprule
 Method &  Standard &  Forward & Co-teaching & \textbf{ODNL (Ours)} & SLN &  \textbf{+ ODNL (Ours)} & DivideMix & \textbf{+ ODNL (Ours)}\\
\midrule
 \textit{best} &70.17&  70.46 & 71.32&  \textbf{72.47} & 72.31&  \textbf{73.12} & 74.32&  \textbf{74.89} \\
\midrule
 \textit{last} & 68.23 &  69.34 & 71.02& \textbf{72.38} & 71.94&  \textbf{72.98} & 73.83&  \textbf{74.35} \\
\bottomrule
\end{tabular}
}}
\end{table*}


\subsection{Clothing1M}

Clothing1M \cite{xiao2015learning} is a large-scale dataset with real-world noisy labels of clothing images. There are 1 million noisy samples for training, 14K and 10K clean examples for validation and test respectively. Following DivideMix \cite{li2020dividemix}, we conduct experiments by sampling a class-balanced training subset in 
each epoch and use ResNet-50 with ImageNet pretrained weights. We set the $\eta$ of the vanilla ODNL as 5. For the auxiliary dataset in our method, we use ImageNet 2012 dataset \cite{ILSVRC15} and remove all classes related to clothes. Other training details strictly follow the settings of DivideMix \cite{li2020dividemix}, presented in Appendix \ref{app:exp_settings}. As shown in Table \ref{tab:Clothing1m}, \textit{best} denotes the score of the epoch where the validation accuracy is optimal, and \textit{last} denotes the scores at the last epoch. The vanilla ODNL outperforms many simple baselines, and it can further improve the performance of SLN and DivideMix. The results show that our method is applicable for large-scale real-world scenarios.

\subsection{OOD detection with noisy labels} \label{OOD_exp}

Out-of-distribution (OOD) detection is an essential problem for the deployment of deep learning especially in safety-critical applications \cite{hendrycks2019oe,liu2020energy}. From the analysis in Section \ref{understanding}, we know that our regularization should have a similar regularization effect to OE \cite{hendrycks2019oe}, which enforces the model to give conservative outputs on OOD examples. Here, we conduct experiments on CIFAR-10 with symmetric-40\% noisy labels to verify the advantage of our method on OOD detection tasks even in a weakly supervised setting. We expect the trained models could perform well in the OOD detection task, even if the labels of training dataset are noisy. Experiments on CIFAR-10 with clean labels are also conducted and the results can be found in Appendix \ref{app:results}. The training settings are the same as those in subsection \ref{cifar10}. Following OE \cite{hendrycks2019oe}, we consider the maximum softmax probability (MSP) baseline \cite{hendrycks2016baseline}, which gives the maximum of softmax output as the OOD score. For the OOD test datasets, we use three types of noises and seven common benchmark datasets: Gaussian, Rademacher \cite{hendrycks2019oe}, Blobs \cite{hendrycks2019oe}, Textures \cite{cimpoi2014describing}, SVHN \cite{netzer2011reading}, CIFAR100, Places365 \cite{zhou2017places}, LSUN-Crop \cite{yu2015lsun}, LSUN-Resize \cite{yu2015lsun}, and iSUN \cite{xu2015turkergaze}. We measure the following metrics: (1) the false positive rate (FPR95) of OOD examples when the true positive rate of in-distribution examples is at 95\%; (2) the area under the receiver operating characteristic curve (AUROC); and (3) the area under the precision-recall curve (AUPR). Table \ref{tab:ood} presents the test accuracy and the average performance over the three types of noises and seven OOD test datasets. We can observe that our method achieves impressive improvement on both the test accuracy and the detection performance, while OE does not bring benefit on the generalization performance. The results can further serve as evidence for the analysis in Section \ref{understanding}.

\begin{table*}[!t]
\footnotesize
\centering
\renewcommand\arraystretch{1}
\caption{ OOD detection performance comparison on CIFAR-10 with symmetric-40\% noisy labels. All values are percentages and are averaged over the ten test datasets described in Appendix \ref{app:exp_settings}. $\uparrow$ indicates larger values are better, and $\downarrow$ indicates smaller values are better. Bold numbers are superior results. Detailed results for each OOD test dataset can be found in Appendix \ref{app:results}} 
\label{tab:ood}
\setlength{\tabcolsep}{2mm}{
\begin{tabular}{ccccc}
\toprule
 Method & Test Accuracy $\uparrow$ & FPR95 $\downarrow$	& AUROC $\uparrow$		& AUPR $\uparrow$	\\
\midrule
 MSP & 77.87&  92.01 &  54.64&  19.41\\
 MSP+OE & 78.12&  24.91 &  94.24&  78.09 \\
MSP+Ours & \textbf{86.29}&  \textbf{13.43} &  \textbf{96.67} &  \textbf{84.70} \\
\bottomrule
\end{tabular}
}
\end{table*}

\section{Related literature}

\noindent\textbf{Learning with noisy labels.} 
Learning with noisy labels is a vital topic in weakly-supervised learning, attracting much recent interest with several directions. 
1) Some methods propose to train on selected samples, using small-loss selection \cite{han2018co, wei2020combating, yu2019does}, GMM distribution \cite{arazo2019unsupervised, li2020dividemix} or (dis)agreement between two models \cite{malach2017decoupling, wei2020combating, yu2019does}. 
2) Some methods aim to design sample weighting schemes that give higher weights on clean samples \cite{jiang2017mentornet, liu2015classification, ren2018learning, shu2019meta, wei2020metainfonet}. 
3) Loss correction is also a popular direction based on an estimated noise transition matrix \cite{hendrycks2018using, patrini2017making} or the model's predictions \cite{arazo2019unsupervised, chen2020beyond, reed2014training, tanaka2018joint, zheng2020error}. 
4) Robust loss functions are also studied to have a theoretical guarantee \cite{feng2020can, ghosh2017robust, lyu2019curriculum, ma2020normalized,wang2019symmetric, xu2019l_dmi, zhang2018generalized}.
5) Some method apply regularization techniques to improve generalization under the settings of label noise \cite{fatras2021wasserstein,hu2019simple, xia2020robust}, like gradient clipping \cite{menon2020can}, label smoothing \cite{lukasik2020does, szegedy2016rethinking}, temporal ensembling \cite{laine2016temporal} and virtual adversarial training \cite{miyato2018virtual}. 
6) Some training strategies for combating noisy labels are built based upon semi-supervised learning methods \cite{li2020dividemix, nguyen2019self}. Studying robustness against label noise helps us further understand the essence of training deep neural networks.

\noindent\textbf{Utilizing auxiliary dataset.} To the best of our knowledge, we are the first to explore the benefits of open-set auxiliary dataset in learning with noisy labels. Auxiliary dataset is also utilized in other problem settings of deep learning. For example, pre-training a network on the large ImageNet database \cite{ILSVRC15} can produce general representations that are useful in many fine-tuning applications \cite{zeiler2014visualizing}. Representation learned from images scraped from search engines and photo-sharing websites could improve objective detection performance \cite{chen2015webly, mahajan2018exploring}. OE uses an auxiliary dataset to teach the network better representations for OOD detection \cite{hendrycks2019oe}. To improve robustness against adversarial attack, a popular method is to train on adversarial examples which can be seen as a generated auxiliary dataset \cite{goodfellow2014explaining}. Unlabelled data is also shown to be beneficial for the adversarial robustness \cite{carmon2019unlabeled, uesato2019labels}. Note that the unlabelled data for improving adversarial robustness is required to be in-distribution, our method utilizes out-of-distribution instances to improve robustness against noisy labels. Recently, OAT \cite{lee2021removing} designed to use out-of-distribution data for improving robustness against adversarial examples, a detailed comparison between OAT and our work is provided in Appendix \ref{app:discussion}.

\noindent\textbf{OOD detection.} OOD detection is an essential building block for safely deploying machine learning models in the open world \cite{hendrycks2016baseline, hendrycks2019oe, liu2020energy}. A common baseline for OOD detection is the softmax confidence score \cite{hendrycks2016baseline}. It has been theoretically shown that neural networks with ReLU activation can produce arbitrarily high softmax confidence for OOD inputs \cite{hein2019relu}. To improve the performance, previous methods have explored using artificially synthesized data from GANs \cite{goodfellow2014generative} or unlabeled data \cite{hendrycks2019oe, lee2017training} as auxiliary OOD training data. Energy scores are shown to be better for distinguishing in- and out-of-distribution samples than softmax scores \cite{liu2020energy}. As a side effect, our method could also achieve superior performance in OOD detection tasks even if the labels of training dataset are noisy.

\section{Conclusion and Discussion}
\label{sec:conclusion}

In this paper, we proposed Open-set regularization with Dynamic Noisy Labels (ODNL), a simple yet effective technique that enhances many existing robust training methods to mitigate the issues of inherent noisy labels across various settings. To the best of our knowledge, our method is the first to utilize open-set auxiliary dataset in the problem of noisy labels. We show that ODNL can be considered as a variant of SGD noises that usually leads to a flat minimum. Extensive experiments show that ODNL can help improve the robustness against various types of noisy labels and it is applicable for large-scale real-world scenarios. Moreover, ODNL also brings a ``side effect'': it can improve the OOD detection performance, which is also essential for the deployment of deep learning in safety-critical applications. On the other hand, ODNL is computationally inexpensive and can be easily integrated into existing algorithms.
Overall, our method is an effective and complementary approach for boosting robustness against inherent noisy labels.

In the future, the observations and analyses in this work can help understand the effect of open-set noises and inspire more specially designed methods using auxiliary dataset to combat label noise in deep learning. There are still some limitations in our current exploration and method as follows.

\textit{instance-dependent open-set noisy labels}. In this work, we only consider open-set noisy labels that are independent from the instances. In real scenarios, an open-set sample with a noisy label might be similar to the clean samples in this class but still do not belong to this class. We may get different observations when we extend the setting to instance-dependent open-set noisy labels.

\textit{Synthesized auxiliary dataset}. In this work, we use realistic open-set dataset in our method. It is also natural to ask whether synthesized data can be used to combating noisy labels. For example, one may generate adversarial examples with dynamic noisy labels to improve robustness against noisy labels.

\begin{ack}
This research was supported by the National Research Foundation, Singapore under its AI Singapore Programme (AISG Award No: AISG-RP-2019-0013), National Satellite of Excellence in Trustworthy Software Systems (Award No: NSOE-TSS2019-01), and NTU. 
\end{ack}




{\small
\bibliographystyle{plainnat}
\bibliography{egbib}

\begin{thebibliography}{82}
\providecommand{\natexlab}[1]{#1}
\providecommand{\url}[1]{\texttt{#1}}
\expandafter\ifx\csname urlstyle\endcsname\relax
  \providecommand{\doi}[1]{doi: #1}\else
  \providecommand{\doi}{doi: \begingroup \urlstyle{rm}\Url}\fi

\bibitem[Arazo et~al.(2019)Arazo, Ortego, Albert, O’Connor, and
  McGuinness]{arazo2019unsupervised}
Eric Arazo, Diego Ortego, Paul Albert, Noel O’Connor, and Kevin McGuinness.
\newblock Unsupervised label noise modeling and loss correction.
\newblock In \emph{International Conference on Machine Learning}, pages
  312--321. PMLR, 2019.

\bibitem[Arpit et~al.(2017)Arpit, Jastrz{\k{e}}bski, Ballas, Krueger, Bengio,
  Kanwal, Maharaj, Fischer, Courville, Bengio, et~al.]{pmlr-v70-arpit17a}
Devansh Arpit, Stanis{\l}aw Jastrz{\k{e}}bski, Nicolas Ballas, David Krueger,
  Emmanuel Bengio, Maxinder~S Kanwal, Tegan Maharaj, Asja Fischer, Aaron
  Courville, Yoshua Bengio, et~al.
\newblock A closer look at memorization in deep networks.
\newblock In \emph{Proceedings of the 34th International Conference on Machine
  Learning}, pages 233--242, 2017.

\bibitem[Blum et~al.(2003)Blum, Kalai, and Wasserman]{blum2003noise}
Avrim Blum, Adam Kalai, and Hal Wasserman.
\newblock Noise-tolerant learning, the parity problem, and the statistical
  query model.
\newblock \emph{Journal of the ACM}, 50\penalty0 (4):\penalty0 506--519, 2003.

\bibitem[Bottou(2012)]{bottou2012stochastic}
L{\'e}on Bottou.
\newblock Stochastic gradient descent tricks.
\newblock In \emph{Neural networks: Tricks of the trade}, pages 421--436.
  Springer, 2012.

\bibitem[Carmon et~al.(2019)Carmon, Raghunathan, Schmidt, Liang, and
  Duchi]{carmon2019unlabeled}
Yair Carmon, Aditi Raghunathan, Ludwig Schmidt, Percy Liang, and John~C Duchi.
\newblock Unlabeled data improves adversarial robustness.
\newblock \emph{arXiv preprint arXiv:1905.13736}, 2019.

\bibitem[Chen et~al.(2020)Chen, Ye, Chen, Zhao, and Heng]{chen2020beyond}
Pengfei Chen, Junjie Ye, Guangyong Chen, Jingwei Zhao, and Pheng-Ann Heng.
\newblock Beyond class-conditional assumption: A primary attempt to combat
  instance-dependent label noise.
\newblock \emph{arXiv preprint arXiv:2012.05458}, 2020.

\bibitem[Chen et~al.(2021)Chen, Chen, Ye, Zhao, and Heng]{chen2021noise}
Pengfei Chen, Guangyong Chen, Junjie Ye, Jingwei Zhao, and Pheng-Ann Heng.
\newblock Noise against noise: stochastic label noise helps combat inherent
  label noise.
\newblock In \emph{International Conference on Learning Representations}, 2021.

\bibitem[Chen and Gupta(2015)]{chen2015webly}
Xinlei Chen and Abhinav Gupta.
\newblock Webly supervised learning of convolutional networks.
\newblock In \emph{Proceedings of the IEEE International Conference on Computer
  Vision}, pages 1431--1439, 2015.

\bibitem[Cimpoi et~al.(2014)Cimpoi, Maji, Kokkinos, Mohamed, and
  Vedaldi]{cimpoi2014describing}
Mircea Cimpoi, Subhransu Maji, Iasonas Kokkinos, Sammy Mohamed, and Andrea
  Vedaldi.
\newblock Describing textures in the wild.
\newblock In \emph{Proceedings of the IEEE Conference on Computer Vision and
  Pattern Recognition}, pages 3606--3613, 2014.

\bibitem[Fatras et~al.(2021)Fatras, Damodaran, Lobry, Flamary, Tuia, and
  Courty]{fatras2021wasserstein}
Kilian Fatras, Bharath~Bhushan Damodaran, Sylvain Lobry, Remi Flamary, Devis
  Tuia, and Nicolas Courty.
\newblock Wasserstein adversarial regularization for learning with label noise.
\newblock \emph{IEEE Transactions on Pattern Analysis and Machine
  Intelligence}, 2021.

\bibitem[Feng et~al.(2020)Feng, Shu, Lin, Lv, Li, and An]{feng2020can}
Lei Feng, Senlin Shu, Zhuoyi Lin, Fengmao Lv, Li~Li, and Bo~An.
\newblock Can cross entropy loss be robust to label noise?
\newblock In \emph{International Joint Conference on Artificial Intelligence},
  pages 2206--2212, 2020.

\bibitem[Ghosh et~al.(2017)Ghosh, Kumar, and Sastry]{ghosh2017robust}
Aritra Ghosh, Himanshu Kumar, and PS~Sastry.
\newblock Robust loss functions under label noise for deep neural networks.
\newblock In \emph{Proceedings of the AAAI Conference on Artificial
  Intelligence}, volume~31, page 1919–1925, 2017.

\bibitem[Goodfellow et~al.(2014{\natexlab{a}})Goodfellow, Pouget-Abadie, Mirza,
  Xu, Warde-Farley, Ozair, Courville, and Bengio]{goodfellow2014generative}
Ian~J Goodfellow, Jean Pouget-Abadie, Mehdi Mirza, Bing Xu, David Warde-Farley,
  Sherjil Ozair, Aaron Courville, and Yoshua Bengio.
\newblock Generative adversarial networks.
\newblock \emph{arXiv preprint arXiv:1406.2661}, 2014{\natexlab{a}}.

\bibitem[Goodfellow et~al.(2014{\natexlab{b}})Goodfellow, Shlens, and
  Szegedy]{goodfellow2014explaining}
Ian~J Goodfellow, Jonathon Shlens, and Christian Szegedy.
\newblock Explaining and harnessing adversarial examples.
\newblock \emph{arXiv preprint arXiv:1412.6572}, 2014{\natexlab{b}}.

\bibitem[Han et~al.(2018)Han, Yao, Yu, Niu, Xu, Hu, Tsang, and
  Sugiyama]{han2018co}
Bo~Han, Quanming Yao, Xingrui Yu, Gang Niu, Miao Xu, Weihua Hu, Ivor Tsang, and
  Masashi Sugiyama.
\newblock Co-teaching: Robust training of deep neural networks with extremely
  noisy labels.
\newblock In \emph{Advances in Neural Information Processing Systems}, pages
  8527--8537, 2018.

\bibitem[He et~al.(2016)He, Zhang, Ren, and Sun]{he2016deep}
Kaiming He, Xiangyu Zhang, Shaoqing Ren, and Jian Sun.
\newblock Deep residual learning for image recognition.
\newblock In \emph{Proceedings of the IEEE conference on Computer Vision and
  Pattern Recognition}, pages 770--778, 2016.

\bibitem[Hein et~al.(2019)Hein, Andriushchenko, and Bitterwolf]{hein2019relu}
Matthias Hein, Maksym Andriushchenko, and Julian Bitterwolf.
\newblock Why relu networks yield high-confidence predictions far away from the
  training data and how to mitigate the problem.
\newblock In \emph{Proceedings of the IEEE/CVF Conference on Computer Vision
  and Pattern Recognition}, pages 41--50, 2019.

\bibitem[Hendrycks and Gimpel(2016)]{hendrycks2016baseline}
Dan Hendrycks and Kevin Gimpel.
\newblock A baseline for detecting misclassified and out-of-distribution
  examples in neural networks.
\newblock \emph{arXiv preprint arXiv:1610.02136}, 2016.

\bibitem[Hendrycks et~al.(2018)Hendrycks, Mazeika, Wilson, and
  Gimpel]{hendrycks2018using}
Dan Hendrycks, Mantas Mazeika, Duncan Wilson, and Kevin Gimpel.
\newblock Using trusted data to train deep networks on labels corrupted by
  severe noise.
\newblock \emph{arXiv preprint arXiv:1802.05300}, 2018.

\bibitem[Hendrycks et~al.(2019)Hendrycks, Mazeika, and
  Dietterich]{hendrycks2019oe}
Dan Hendrycks, Mantas Mazeika, and Thomas Dietterich.
\newblock Deep anomaly detection with outlier exposure.
\newblock \emph{Proceedings of the International Conference on Learning
  Representations}, 2019.

\bibitem[Ho et~al.(2020)Ho, Jain, and Abbeel]{ho2020denoising}
Jonathan Ho, Ajay Jain, and Pieter Abbeel.
\newblock Denoising diffusion probabilistic models.
\newblock \emph{arXiv preprint arXiv:2006.11239}, 2020.

\bibitem[Hochreiter and Schmidhuber(1997)]{hochreiter1997flat}
Sepp Hochreiter and J{\"u}rgen Schmidhuber.
\newblock Flat minima.
\newblock \emph{Neural Computation}, 9\penalty0 (1):\penalty0 1--42, 1997.

\bibitem[Hu et~al.(2019)Hu, Li, and Yu]{hu2019simple}
Wei Hu, Zhiyuan Li, and Dingli Yu.
\newblock Simple and effective regularization methods for training on noisily
  labeled data with generalization guarantee.
\newblock \emph{arXiv preprint arXiv:1905.11368}, 2019.

\bibitem[Jiang et~al.(2017)Jiang, Zhou, Leung, Li, and
  Fei-Fei]{jiang2017mentornet}
Lu~Jiang, Zhengyuan Zhou, Thomas Leung, Li-Jia Li, and Li~Fei-Fei.
\newblock Mentornet: Learning data-driven curriculum for very deep neural
  networks on corrupted labels.
\newblock \emph{arXiv preprint arXiv:1712.05055}, 2017.

\bibitem[Keskar et~al.(2016)Keskar, Mudigere, Nocedal, Smelyanskiy, and
  Tang]{keskar2016large}
Nitish~Shirish Keskar, Dheevatsa Mudigere, Jorge Nocedal, Mikhail Smelyanskiy,
  and Ping Tak~Peter Tang.
\newblock On large-batch training for deep learning: Generalization gap and
  sharp minima.
\newblock \emph{arXiv preprint arXiv:1609.04836}, 2016.

\bibitem[Kleinberg et~al.(2018)Kleinberg, Li, and
  Yuan]{kleinberg2018alternative}
Bobby Kleinberg, Yuanzhi Li, and Yang Yuan.
\newblock An alternative view: When does sgd escape local minima?
\newblock In \emph{Proceedings of International Conference on Machine
  Learning}, pages 2698--2707. PMLR, 2018.

\bibitem[Krizhevsky et~al.(2009)Krizhevsky, Hinton,
  et~al.]{krizhevsky2009learning}
Alex Krizhevsky, Geoffrey Hinton, et~al.
\newblock Learning multiple layers of features from tiny images.
\newblock 2009.

\bibitem[Laine and Aila(2016)]{laine2016temporal}
Samuli Laine and Timo Aila.
\newblock Temporal ensembling for semi-supervised learning.
\newblock \emph{arXiv preprint arXiv:1610.02242}, 2016.

\bibitem[Lee et~al.(2017)Lee, Lee, Lee, and Shin]{lee2017training}
Kimin Lee, Honglak Lee, Kibok Lee, and Jinwoo Shin.
\newblock Training confidence-calibrated classifiers for detecting
  out-of-distribution samples.
\newblock \emph{arXiv preprint arXiv:1711.09325}, 2017.

\bibitem[Lee et~al.(2019)Lee, Yun, Lee, Lee, Li, and Shin]{lee2019robust}
Kimin Lee, Sukmin Yun, Kibok Lee, Honglak Lee, Bo~Li, and Jinwoo Shin.
\newblock Robust inference via generative classifiers for handling noisy
  labels.
\newblock In \emph{Proceedings of International Conference on Machine
  Learning}, pages 3763--3772. PMLR, 2019.

\bibitem[Lee et~al.(2021)Lee, Park, Lee, Yi, Lee, and Yoon]{lee2021removing}
Saehyung Lee, Changhwa Park, Hyungyu Lee, Jihun Yi, Jonghyun Lee, and Sungroh
  Yoon.
\newblock Removing undesirable feature contributions using out-of-distribution
  data.
\newblock 2021.

\bibitem[Li et~al.(2018)Li, Xu, Taylor, Studer, and
  Goldstein]{li2018visualizing}
Hao Li, Zheng Xu, Gavin Taylor, Christoph Studer, and Tom Goldstein.
\newblock Visualizing the loss landscape of neural nets.
\newblock In \emph{Advances in Neural Information Processing Systems}, pages
  6391--–6401, 2018.

\bibitem[Li et~al.(2020)Li, Socher, and Hoi]{li2020dividemix}
Junnan Li, Richard Socher, and Steven~CH Hoi.
\newblock Dividemix: Learning with noisy labels as semi-supervised learning.
\newblock \emph{arXiv preprint arXiv:2002.07394}, 2020.

\bibitem[Liu and Tao(2015)]{liu2015classification}
Tongliang Liu and Dacheng Tao.
\newblock Classification with noisy labels by importance reweighting.
\newblock \emph{IEEE Transactions on pattern analysis and machine
  intelligence}, 38\penalty0 (3):\penalty0 447--461, 2015.

\bibitem[Liu et~al.(2020)Liu, Wang, Owens, and Li]{liu2020energy}
Weitang Liu, Xiaoyun Wang, John~D Owens, and Yixuan Li.
\newblock Energy-based out-of-distribution detection.
\newblock \emph{arXiv preprint arXiv:2010.03759}, 2020.

\bibitem[Lukasik et~al.(2020)Lukasik, Bhojanapalli, Menon, and
  Kumar]{lukasik2020does}
Michal Lukasik, Srinadh Bhojanapalli, Aditya Menon, and Sanjiv Kumar.
\newblock Does label smoothing mitigate label noise?
\newblock In \emph{Proceedings of International Conference on Machine
  Learning}, pages 6448--6458. PMLR, 2020.

\bibitem[Lyu and Tsang(2019)]{lyu2019curriculum}
Yueming Lyu and Ivor~W Tsang.
\newblock Curriculum loss: Robust learning and generalization against label
  corruption.
\newblock \emph{arXiv preprint arXiv:1905.10045}, 2019.

\bibitem[Ma et~al.(2020)Ma, Huang, Wang, Romano, Erfani, and
  Bailey]{ma2020normalized}
Xingjun Ma, Hanxun Huang, Yisen Wang, Simone Romano, Sarah Erfani, and James
  Bailey.
\newblock Normalized loss functions for deep learning with noisy labels.
\newblock In \emph{International Conference on Machine Learning}, pages
  6543--6553. PMLR, 2020.

\bibitem[Mahajan et~al.(2018)Mahajan, Girshick, Ramanathan, He, Paluri, Li,
  Bharambe, and Van Der~Maaten]{mahajan2018exploring}
Dhruv Mahajan, Ross Girshick, Vignesh Ramanathan, Kaiming He, Manohar Paluri,
  Yixuan Li, Ashwin Bharambe, and Laurens Van Der~Maaten.
\newblock Exploring the limits of weakly supervised pretraining.
\newblock In \emph{Proceedings of the European Conference on Computer Vision},
  pages 181--196, 2018.

\bibitem[Malach and Shalev-Shwartz(2017)]{malach2017decoupling}
Eran Malach and Shai Shalev-Shwartz.
\newblock Decoupling" when to update" from" how to update".
\newblock \emph{arXiv preprint arXiv:1706.02613}, 2017.

\bibitem[Menon et~al.(2020)Menon, Rawat, Kumar, and Reddi]{menon2020can}
Aditya~Krishna Menon, Ankit~Singh Rawat, Sanjiv Kumar, and Sashank Reddi.
\newblock Can gradient clipping mitigate label noise?
\newblock In \emph{International Conference on Learning Representations
  (ICLR)}, 2020.

\bibitem[Miyato et~al.(2018)Miyato, Maeda, Koyama, and
  Ishii]{miyato2018virtual}
Takeru Miyato, Shin-ichi Maeda, Masanori Koyama, and Shin Ishii.
\newblock Virtual adversarial training: a regularization method for supervised
  and semi-supervised learning.
\newblock \emph{IEEE transactions on Pattern Analysis and Machine
  Intelligence}, 41\penalty0 (8):\penalty0 1979--1993, 2018.

\bibitem[Nakkiran et~al.(2020)Nakkiran, Neyshabur, and
  Sedghi]{nakkiran2020deep}
Preetum Nakkiran, Behnam Neyshabur, and Hanie Sedghi.
\newblock The deep bootstrap: Good online learners are good offline
  generalizers.
\newblock \emph{arXiv preprint arXiv:2010.08127}, 2020.

\bibitem[Netzer et~al.(2011{\natexlab{a}})Netzer, Wang, Coates, Bissacco, Wu,
  and Ng]{yuval2011svhn}
Yuval Netzer, T.~Wang, A.~Coates, A.~Bissacco, B.~Wu, and A.~Ng.
\newblock Reading digits in natural images with unsupervised feature learning.
\newblock In \emph{NIPS Workshop on Deep Learning and Unsupervised Feature
  Learning}, 2011{\natexlab{a}}.

\bibitem[Netzer et~al.(2011{\natexlab{b}})Netzer, Wang, Coates, Bissacco, Wu,
  and Ng]{netzer2011reading}
Yuval Netzer, Tao Wang, Adam Coates, Alessandro Bissacco, Bo~Wu, and Andrew~Y
  Ng.
\newblock Reading digits in natural images with unsupervised feature learning.
\newblock 2011{\natexlab{b}}.

\bibitem[Neyshabur et~al.(2017)Neyshabur, Bhojanapalli, McAllester, and
  Srebro]{neyshabur2017exploring}
Behnam Neyshabur, Srinadh Bhojanapalli, David McAllester, and Nathan Srebro.
\newblock Exploring generalization in deep learning.
\newblock \emph{arXiv preprint arXiv:1706.08947}, 2017.

\bibitem[Nguyen et~al.(2019)Nguyen, Mummadi, Ngo, Nguyen, Beggel, and
  Brox]{nguyen2019self}
Duc~Tam Nguyen, Chaithanya~Kumar Mummadi, Thi Phuong~Nhung Ngo, Thi Hoai~Phuong
  Nguyen, Laura Beggel, and Thomas Brox.
\newblock Self: Learning to filter noisy labels with self-ensembling.
\newblock \emph{arXiv preprint arXiv:1910.01842}, 2019.

\bibitem[Papadopoulos et~al.(2021)Papadopoulos, Rajati, Shaikh, and
  Wang]{papadopoulos2021outlier}
Aristotelis-Angelos Papadopoulos, Mohammad~Reza Rajati, Nazim Shaikh, and
  Jiamian Wang.
\newblock Outlier exposure with confidence control for out-of-distribution
  detection.
\newblock \emph{Neurocomputing}, 441:\penalty0 138--150, 2021.

\bibitem[Paszke et~al.(2019)Paszke, Gross, Massa, Lerer, Bradbury, Chanan,
  Killeen, Lin, Gimelshein, Antiga, Desmaison, Kopf, Yang, DeVito, Raison,
  Tejani, Chilamkurthy, Steiner, Fang, Bai, and Chintala]{NEURIPS2019_9015}
Adam Paszke, Sam Gross, Francisco Massa, Adam Lerer, James Bradbury, Gregory
  Chanan, Trevor Killeen, Zeming Lin, Natalia Gimelshein, Luca Antiga, Alban
  Desmaison, Andreas Kopf, Edward Yang, Zachary DeVito, Martin Raison, Alykhan
  Tejani, Sasank Chilamkurthy, Benoit Steiner, Lu~Fang, Junjie Bai, and Soumith
  Chintala.
\newblock Pytorch: An imperative style, high-performance deep learning library.
\newblock In \emph{Advances in Neural Information Processing Systems}, pages
  8024--8035. Curran Associates, Inc., 2019.

\bibitem[Patrini et~al.(2017)Patrini, Rozza, Krishna~Menon, Nock, and
  Qu]{patrini2017making}
Giorgio Patrini, Alessandro Rozza, Aditya Krishna~Menon, Richard Nock, and
  Lizhen Qu.
\newblock Making deep neural networks robust to label noise: A loss correction
  approach.
\newblock In \emph{Proceedings of the IEEE Conference on Computer Vision and
  Pattern Recognition}, pages 1944--1952, 2017.

\bibitem[Reed et~al.(2014)Reed, Lee, Anguelov, Szegedy, Erhan, and
  Rabinovich]{reed2014training}
Scott Reed, Honglak Lee, Dragomir Anguelov, Christian Szegedy, Dumitru Erhan,
  and Andrew Rabinovich.
\newblock Training deep neural networks on noisy labels with bootstrapping.
\newblock \emph{arXiv preprint arXiv:1412.6596}, 2014.

\bibitem[Ren et~al.(2018)Ren, Zeng, Yang, and Urtasun]{ren2018learning}
Mengye Ren, Wenyuan Zeng, Bin Yang, and Raquel Urtasun.
\newblock Learning to reweight examples for robust deep learning.
\newblock In \emph{International Conference on Machine Learning}, pages
  4334--4343. PMLR, 2018.

\bibitem[Russakovsky et~al.(2015)Russakovsky, Deng, Su, Krause, Satheesh, Ma,
  Huang, Karpathy, Khosla, Bernstein, Berg, and Fei-Fei]{ILSVRC15}
Olga Russakovsky, Jia Deng, Hao Su, Jonathan Krause, Sanjeev Satheesh, Sean Ma,
  Zhiheng Huang, Andrej Karpathy, Aditya Khosla, Michael Bernstein,
  Alexander~C. Berg, and Li~Fei-Fei.
\newblock {ImageNet Large Scale Visual Recognition Challenge}.
\newblock \emph{International Journal of Computer Vision}, 115\penalty0
  (3):\penalty0 211--252, 2015.

\bibitem[Sachdeva et~al.(2021)Sachdeva, Cordeiro, Belagiannis, Reid, and
  Carneiro]{sachdeva2021evidentialmix}
Ragav Sachdeva, Filipe~R Cordeiro, Vasileios Belagiannis, Ian Reid, and Gustavo
  Carneiro.
\newblock Evidentialmix: Learning with combined open-set and closed-set noisy
  labels.
\newblock In \emph{Proceedings of the IEEE/CVF Winter Conference on
  Applications of Computer Vision}, pages 3607--3615, 2021.

\bibitem[Shu et~al.(2019)Shu, Xie, Yi, Zhao, Zhou, Xu, and Meng]{shu2019meta}
Jun Shu, Qi~Xie, Lixuan Yi, Qian Zhao, Sanping Zhou, Zongben Xu, and Deyu Meng.
\newblock Meta-weight-net: Learning an explicit mapping for sample weighting.
\newblock In \emph{Advances in Neural Information Processing Systems}, pages
  1919--1930, 2019.

\bibitem[Simonyan and Zisserman(2014)]{simonyan2014very}
Karen Simonyan and Andrew Zisserman.
\newblock Very deep convolutional networks for large-scale image recognition.
\newblock \emph{arXiv preprint arXiv:1409.1556}, 2014.

\bibitem[Szegedy et~al.(2016)Szegedy, Vanhoucke, Ioffe, Shlens, and
  Wojna]{szegedy2016rethinking}
Christian Szegedy, Vincent Vanhoucke, Sergey Ioffe, Jon Shlens, and Zbigniew
  Wojna.
\newblock Rethinking the inception architecture for computer vision.
\newblock In \emph{Proceedings of the IEEE conference on Computer Vision and
  Pattern Recognition}, pages 2818--2826, 2016.

\bibitem[Tanaka et~al.(2018)Tanaka, Ikami, Yamasaki, and
  Aizawa]{tanaka2018joint}
Daiki Tanaka, Daiki Ikami, Toshihiko Yamasaki, and Kiyoharu Aizawa.
\newblock Joint optimization framework for learning with noisy labels.
\newblock In \emph{Proceedings of the IEEE Conference on Computer Vision and
  Pattern Recognition}, pages 5552--5560, 2018.

\bibitem[Uesato et~al.(2019)Uesato, Alayrac, Huang, Stanforth, Fawzi, and
  Kohli]{uesato2019labels}
Jonathan Uesato, Jean-Baptiste Alayrac, Po-Sen Huang, Robert Stanforth,
  Alhussein Fawzi, and Pushmeet Kohli.
\newblock Are labels required for improving adversarial robustness?
\newblock \emph{arXiv preprint arXiv:1905.13725}, 2019.

\bibitem[Van~Rooyen et~al.(2015)Van~Rooyen, Menon, and
  Williamson]{van2015learning}
Brendan Van~Rooyen, Aditya Menon, and Robert~C Williamson.
\newblock Learning with symmetric label noise: The importance of being
  unhinged.
\newblock In \emph{Advances in Neural Information Processing Systems}, pages
  10--18, 2015.

\bibitem[Wang et~al.(2018)Wang, Liu, Ma, Bailey, Zha, Song, and
  Xia]{wang2018iterative}
Yisen Wang, Weiyang Liu, Xingjun Ma, James Bailey, Hongyuan Zha, Le~Song, and
  Shu-Tao Xia.
\newblock Iterative learning with open-set noisy labels.
\newblock In \emph{Proceedings of the IEEE conference on Computer Vision and
  Pattern Recognition}, pages 8688--8696, 2018.

\bibitem[Wang et~al.(2019)Wang, Ma, Chen, Luo, Yi, and
  Bailey]{wang2019symmetric}
Yisen Wang, Xingjun Ma, Zaiyi Chen, Yuan Luo, Jinfeng Yi, and James Bailey.
\newblock Symmetric cross entropy for robust learning with noisy labels.
\newblock In \emph{Proceedings of the IEEE/CVF International Conference on
  Computer Vision}, pages 322--330, 2019.

\bibitem[Wei et~al.(2020{\natexlab{a}})Wei, Feng, Chen, and
  An]{wei2020combating}
Hongxin Wei, Lei Feng, Xiangyu Chen, and Bo~An.
\newblock Combating noisy labels by agreement: A joint training method with
  co-regularization.
\newblock In \emph{Proceedings of the IEEE/CVF Conference on Computer Vision
  and Pattern Recognition}, pages 13726--13735, 2020{\natexlab{a}}.

\bibitem[Wei et~al.(2020{\natexlab{b}})Wei, Feng, Wang, and
  An]{wei2020metainfonet}
Hongxin Wei, Lei Feng, Rundong Wang, and Bo~An.
\newblock Metainfonet: Learning task-guided information for sample reweighting.
\newblock \emph{arXiv preprint arXiv:2012.05273}, 2020{\natexlab{b}}.

\bibitem[Wu et~al.(2020)Wu, Hu, Xiong, Huan, Braverman, and Zhu]{wu2020noisy}
Jingfeng Wu, Wenqing Hu, Haoyi Xiong, Jun Huan, Vladimir Braverman, and
  Zhanxing Zhu.
\newblock On the noisy gradient descent that generalizes as sgd.
\newblock In \emph{International Conference on Machine Learning}, pages
  10367--10376. PMLR, 2020.

\bibitem[Xia et~al.(2020{\natexlab{a}})Xia, Liu, Han, Gong, Wang, Ge, and
  Chang]{xia2020robust}
Xiaobo Xia, Tongliang Liu, Bo~Han, Chen Gong, Nannan Wang, Zongyuan Ge, and
  Yi~Chang.
\newblock Robust early-learning: Hindering the memorization of noisy labels.
\newblock In \emph{International Conference on Learning Representations},
  2020{\natexlab{a}}.

\bibitem[Xia et~al.(2020{\natexlab{b}})Xia, Liu, Han, Wang, Deng, Li, and
  Mao]{xia2020extended}
Xiaobo Xia, Tongliang Liu, Bo~Han, Nannan Wang, Jiankang Deng, Jiatong Li, and
  Yinian Mao.
\newblock Extended t: Learning with mixed closed-set and open-set noisy labels.
\newblock \emph{arXiv preprint arXiv:2012.00932}, 2020{\natexlab{b}}.

\bibitem[Xia et~al.(2020{\natexlab{c}})Xia, Liu, Han, Wang, Gong, Liu, Niu,
  Tao, and Sugiyama]{xia2020parts}
Xiaobo Xia, Tongliang Liu, Bo~Han, Nannan Wang, Mingming Gong, Haifeng Liu,
  Gang Niu, Dacheng Tao, and Masashi Sugiyama.
\newblock Parts-dependent label noise: Towards instance-dependent label noise.
\newblock \emph{arXiv preprint arXiv:2006.07836}, 2020{\natexlab{c}}.

\bibitem[Xiao et~al.(2010)Xiao, Hays, Ehinger, Oliva, and
  Torralba]{xiao2010sun}
Jianxiong Xiao, James Hays, Krista~A Ehinger, Aude Oliva, and Antonio Torralba.
\newblock Sun database: Large-scale scene recognition from abbey to zoo.
\newblock In \emph{proceedings of the IEEE Conference on Computer Vision and
  Pattern Recognition}, pages 3485--3492. IEEE, 2010.

\bibitem[Xiao et~al.(2015)Xiao, Xia, Yang, Huang, and Wang]{xiao2015learning}
Tong Xiao, Tian Xia, Yi~Yang, Chang Huang, and Xiaogang Wang.
\newblock Learning from massive noisy labeled data for image classification.
\newblock In \emph{Proceedings of the IEEE conference on Computer Vision and
  Pattern Recognition}, pages 2691--2699, 2015.

\bibitem[Xie et~al.(2016)Xie, Wang, Wei, Wang, and Tian]{xie2016disturblabel}
Lingxi Xie, Jingdong Wang, Zhen Wei, Meng Wang, and Qi~Tian.
\newblock Disturblabel: Regularizing cnn on the loss layer.
\newblock In \emph{Proceedings of the IEEE Conference on Computer Vision and
  Pattern Recognition}, pages 4753--4762, 2016.

\bibitem[Xu et~al.(2015)Xu, Ehinger, Zhang, Finkelstein, Kulkarni, and
  Xiao]{xu2015turkergaze}
Pingmei Xu, Krista~A Ehinger, Yinda Zhang, Adam Finkelstein, Sanjeev~R
  Kulkarni, and Jianxiong Xiao.
\newblock Turkergaze: Crowdsourcing saliency with webcam based eye tracking.
\newblock \emph{arXiv preprint arXiv:1504.06755}, 2015.

\bibitem[Xu et~al.(2019)Xu, Cao, Kong, and Wang]{xu2019l_dmi}
Yilun Xu, Peng Cao, Yuqing Kong, and Yizhou Wang.
\newblock L\_dmi: A novel information-theoretic loss function for training deep
  nets robust to label noise.
\newblock In \emph{Advances in Neural Information Processing Systems}, pages
  6222--6233, 2019.

\bibitem[Yan et~al.(2014)Yan, Rosales, Fung, Subramanian, and
  Dy]{yan2014learning}
Yan Yan, R{\'o}mer Rosales, Glenn Fung, Ramanathan Subramanian, and Jennifer
  Dy.
\newblock Learning from multiple annotators with varying expertise.
\newblock \emph{Machine Learning}, 95\penalty0 (3):\penalty0 291--327, 2014.

\bibitem[Yu et~al.(2015)Yu, Seff, Zhang, Song, Funkhouser, and
  Xiao]{yu2015lsun}
Fisher Yu, Ari Seff, Yinda Zhang, Shuran Song, Thomas Funkhouser, and Jianxiong
  Xiao.
\newblock Lsun: Construction of a large-scale image dataset using deep learning
  with humans in the loop.
\newblock \emph{arXiv preprint arXiv:1506.03365}, 2015.

\bibitem[Yu et~al.(2019)Yu, Han, Yao, Niu, Tsang, and Sugiyama]{yu2019does}
Xingrui Yu, Bo~Han, Jiangchao Yao, Gang Niu, Ivor Tsang, and Masashi Sugiyama.
\newblock How does disagreement help generalization against label corruption?
\newblock In \emph{International Conference on Machine Learning}, pages
  7164--7173. PMLR, 2019.

\bibitem[Zagoruyko and Komodakis(2016)]{zagoruyko2016wide}
Sergey Zagoruyko and Nikos Komodakis.
\newblock Wide residual networks.
\newblock \emph{arXiv preprint arXiv:1605.07146}, 2016.

\bibitem[Zeiler and Fergus(2014)]{zeiler2014visualizing}
Matthew~D Zeiler and Rob Fergus.
\newblock Visualizing and understanding convolutional networks.
\newblock In \emph{European Conference on Computer Vision}, pages 818--833.
  Springer, 2014.

\bibitem[Zhang et~al.(2016)Zhang, Bengio, Hardt, Recht, and
  Vinyals]{zhang2016understanding}
Chiyuan Zhang, Samy Bengio, Moritz Hardt, Benjamin Recht, and Oriol Vinyals.
\newblock Understanding deep learning requires rethinking generalization.
\newblock In \emph{Proceedings of International Conference on Learning
  Representations}, 2016.

\bibitem[Zhang and Sabuncu(2018)]{zhang2018generalized}
Zhilu Zhang and Mert~R Sabuncu.
\newblock Generalized cross entropy loss for training deep neural networks with
  noisy labels.
\newblock \emph{arXiv preprint arXiv:1805.07836}, 2018.

\bibitem[Zheng et~al.(2020)Zheng, Wu, Goswami, Goswami, Metaxas, and
  Chen]{zheng2020error}
Songzhu Zheng, Pengxiang Wu, Aman Goswami, Mayank Goswami, Dimitris Metaxas,
  and Chao Chen.
\newblock Error-bounded correction of noisy labels.
\newblock In \emph{International Conference on Machine Learning}, pages
  11447--11457. PMLR, 2020.

\bibitem[Zhou et~al.(2017)Zhou, Lapedriza, Khosla, Oliva, and
  Torralba]{zhou2017places}
Bolei Zhou, Agata Lapedriza, Aditya Khosla, Aude Oliva, and Antonio Torralba.
\newblock Places: A 10 million image database for scene recognition.
\newblock \emph{IEEE Transactions on Pattern Analysis and Machine
  Intelligence}, 40\penalty0 (6):\penalty0 1452--1464, 2017.

\end{thebibliography}
}

\clearpage

\begin{appendices}
\setcounter{table}{4}

\section{Proofs}
\label{app:proofs}

\begin{prop}
For the cross-entropy loss, Eq.~(6) induces noise $\boldsymbol{z} = - \frac{\nabla_{\boldsymbol{\theta}}\boldsymbol{\widetilde{f}}_j}{\boldsymbol{\widetilde{f}}_j}$ on $\nabla_{\boldsymbol{\theta}}\ell(\boldsymbol{f},y)$, s.t., $\boldsymbol{z} \in \mathbb{R}^{p}, j \sim \mathcal{U}_k$, where $\frac{\cdot}{\boldsymbol{\widetilde{f}}_j}$ denotes the element-wise division. Note that the expected value of noise on the $i$-th parameter $\boldsymbol{\theta}_i$ is $-\frac{1}{k}\sum_{j=1}^{k}\frac{\nabla_{\boldsymbol{\theta}_i}\boldsymbol{\widetilde{f}}_j}{\boldsymbol{\widetilde{f}}_j}$.
\end{prop}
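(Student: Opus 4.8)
The plan is to read off the ``noise'' in the SGD-noise sense as precisely the extra gradient term that the auxiliary loss contributes on top of the clean gradient, and then to evaluate that term explicitly by a one-line chain-rule computation. Concretely, in Eq.~(6) the perturbed update differs from the noise-free update $\nabla_{\boldsymbol{\theta}}\ell(\boldsymbol{f},y)$ by exactly $\nabla_{\boldsymbol{\theta}}\ell(\boldsymbol{\widetilde{f}},\widetilde{y})$ (with $\eta$ absorbed as agreed before the proposition), so this additive term is the object I must identify as $\boldsymbol{z}$.

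First I would fix a realization $\widetilde{y}=j$ with $j\sim\mathcal{U}_k$ and write the single-sample cross-entropy on the open-set point as $\ell(\boldsymbol{\widetilde{f}},j) = -\boldsymbol{e}^{j}\log\boldsymbol{\widetilde{f}} = -\log\boldsymbol{\widetilde{f}}_j$, using that $\boldsymbol{\widetilde{f}}$ is the softmax output, so $\boldsymbol{\widetilde{f}}_j$ is the predicted probability of class $j$. Differentiating with respect to $\boldsymbol{\theta}$ and applying the chain rule through the scalar $\boldsymbol{\widetilde{f}}_j$ immediately gives $\nabla_{\boldsymbol{\theta}}\ell(\boldsymbol{\widetilde{f}},j) = -\nabla_{\boldsymbol{\theta}}\boldsymbol{\widetilde{f}}_j/\boldsymbol{\widetilde{f}}_j$, which is the claimed $\boldsymbol{z}\in\mathbb{R}^{p}$. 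Note that the division is by a scalar and that the softmax Jacobian is already folded into $\nabla_{\boldsymbol{\theta}}\boldsymbol{\widetilde{f}}_j$, so no explicit expansion of the softmax layer is needed.

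Then, since the random label index satisfies $j\sim\mathcal{U}_k$, I would take the expectation over $j$ and use linearity to obtain $\mathbb{E}_j[\boldsymbol{z}] = -\frac{1}{k}\sum_{j=1}^{k}\nabla_{\boldsymbol{\theta}}\boldsymbol{\widetilde{f}}_j/\boldsymbol{\widetilde{f}}_j$; restricting this expression to the $i$-th coordinate $\boldsymbol{\theta}_i$ yields the stated expected value $-\frac{1}{k}\sum_{j=1}^{k}\nabla_{\boldsymbol{\theta}_i}\boldsymbol{\widetilde{f}}_j/\boldsymbol{\widetilde{f}}_j$.

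There is essentially no analytical obstacle here: the whole argument is one chain-rule step plus averaging against the uniform distribution. The only point needing care is conceptual rather than computational, namely making explicit that ``noise on $\nabla_{\boldsymbol{\theta}}\ell(\boldsymbol{f},y)$'' refers to the additive perturbation the auxiliary term contributes, and that the relevant randomness is over the freshly sampled label $j$ with the open-set input $\boldsymbol{\widetilde{x}}$ (hence $\boldsymbol{\widetilde{f}}$) held fixed at the current step. Keeping $\boldsymbol{\widetilde{f}}$ fixed is exactly what makes the expectation a coordinate-wise average over the $k$ candidate gradients $\{-\nabla_{\boldsymbol{\theta}}\boldsymbol{\widetilde{f}}_j/\boldsymbol{\widetilde{f}}_j\}_{j=1}^{k}$, matching the ``random-direction'' interpretation invoked immediately after the proposition.
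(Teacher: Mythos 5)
Your proof is correct and follows essentially the same route as the paper's: identify the noise as the additive auxiliary-gradient term $\nabla_{\boldsymbol{\theta}}\ell(\boldsymbol{\widetilde{f}},\widetilde{y})$, compute it by the chain rule for cross-entropy to get $\boldsymbol{z} = -\nabla_{\boldsymbol{\theta}}\boldsymbol{\widetilde{f}}_j/\boldsymbol{\widetilde{f}}_j$ with $j\sim\mathcal{U}_k$, and average over the uniform label to obtain the coordinate-wise expectation. The only cosmetic difference is that you collapse the one-hot vector into $-\log\boldsymbol{\widetilde{f}}_j$ before differentiating, whereas the paper writes the full Jacobian product $\nabla_{\widetilde{\boldsymbol{f}}}\ell\cdot\nabla_{\boldsymbol{\theta}}\widetilde{\boldsymbol{f}}$ first and then applies the one-hot selection; these are the same computation in a different order.
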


\begin{proof}
For Eq.~(6), the noisy gradient is
\begin{align*}
    \widetilde{\nabla}_{\boldsymbol{\theta}}\ell_{\mathrm{total}} = \nabla_{\boldsymbol{\theta}}\ell\left(\boldsymbol{f},y\right) + \eta \cdot \nabla_{\boldsymbol{\theta}}\ell\left(\widetilde{\boldsymbol{f}},\widetilde{y}\right) &=\nabla_{\boldsymbol{\theta}}\ell\left(\boldsymbol{f},y\right) + \eta \cdot \nabla_{\widetilde{\boldsymbol{f}}}\ell\left(\widetilde{\boldsymbol{f}},\widetilde{y}\right)\cdot\nabla_{\boldsymbol{\theta}}\widetilde{\boldsymbol{f}}\\
\end{align*} 
For convenience, we omit the hyperparameter $\eta$.
Note that cross-entropy loss is $\ell\left(\boldsymbol{f},y\right) = -\boldsymbol{e}^{y} \log \boldsymbol{f}$, then the noise on $\nabla_{\boldsymbol{\theta}}\ell\left(\boldsymbol{f},y\right)$ is
\begin{equation*}
    \boldsymbol{z} = \nabla_{\widetilde{\boldsymbol{f}}}\ell\left(\widetilde{\boldsymbol{f}},\widetilde{y}\right)\cdot\nabla_{\boldsymbol{\theta}}\widetilde{\boldsymbol{f}} = -\left(\frac{e^{\widetilde{y}}}{\widetilde{\boldsymbol{f}}}\right)^{T}\cdot\nabla_{\boldsymbol{\theta}}\widetilde{\boldsymbol{f}} = - \sum_{j=1}^{k} \left(e^{\widetilde{y}}_j \cdot \frac{\nabla_{\boldsymbol{\theta_i}}\boldsymbol{\widetilde{f}}_j}{\boldsymbol{\widetilde{f}}_j}\right), \quad \text{s.t. } \widetilde{y} \sim \mathcal{U}_{k}.
\end{equation*}
Since $\boldsymbol{e}^{\widetilde{y}} = (0, \cdots, 1, \cdots, 0)$ is the one-hot vector and only the $y$-th entry is 1, the expression of the noise $\boldsymbol{z}$ can be simplified as:
\begin{equation*}
    \boldsymbol{z} = -\frac{\nabla_{\boldsymbol{\theta}}\boldsymbol{\widetilde{f}}_j}{\boldsymbol{\widetilde{f}}_j}, \quad \text{s.t. } j \sim \mathcal{U}_k.
\end{equation*}

Let $z_i$ be the $i$-th entry of $\boldsymbol{z}$, we have
\begin{equation*}
    z_i = -\frac{\nabla_{\boldsymbol{\theta}_i}\boldsymbol{\widetilde{f}}_j}{\boldsymbol{\widetilde{f}}_j}, \quad \text{s.t. } j \sim \mathcal{U}_k.
\end{equation*}

Hence,
\begin{equation*}
    \mathbb{E}\left(z_i\right) = -\frac{1}{k}\sum_{j=1}^{k}\frac{\nabla_{\boldsymbol{\theta}_i}\boldsymbol{\widetilde{f}}_j}{\boldsymbol{\widetilde{f}}_j}
\end{equation*}
\end{proof}



\setcounter{prop}{2}

\begin{prop}
For the cross-entropy loss, each open-set sample in OE induces bias $\boldsymbol{z} = -\frac{1}{k}\sum_{j=1}^{k}\frac{\nabla_{\boldsymbol{\theta}}\boldsymbol{\widetilde{f}}_j}{\boldsymbol{\widetilde{f}}_j}$ on $\nabla_{\boldsymbol{\theta}}\ell(\boldsymbol{x},y)$, s.t., $\boldsymbol{z} \in \mathbb{R}^{p}$, where $\frac{\cdot}{\boldsymbol{\widetilde{f}}_j}$ denotes the element-wise division.
\end{prop}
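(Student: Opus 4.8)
The plan is to mirror the gradient computation already carried out for Proposition 1 (the ODNL case), the only structural change being that OE's regularizer uses a \emph{deterministic} uniform soft label in place of the randomly sampled one-hot vector $\boldsymbol{e}^{\widetilde{y}}$. Concretely, I would first observe that the open-set term in the OE objective, $-\frac{1}{k}\sum_{i=1}^{k}\log\widetilde{f}_i$, is exactly the cross-entropy loss $\ell(\widetilde{\boldsymbol{f}},\boldsymbol{u})$ evaluated against the uniform target $\boldsymbol{u}=\frac{1}{k}\mathbf{1}\in\mathbb{R}^{k}$. This reframing lets me reuse the loss-gradient machinery of Proposition 1 verbatim, since the OE penalty and the ODNL penalty differ only in which label vector is fed into the same $-\boldsymbol{e}\log\widetilde{\boldsymbol{f}}$ template.

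Next, I would compute the gradient of the OE penalty with respect to the model output, giving $\nabla_{\widetilde{\boldsymbol{f}}}\ell(\widetilde{\boldsymbol{f}},\boldsymbol{u}) = -\left(\boldsymbol{u}/\widetilde{\boldsymbol{f}}\right)^{T}$ under element-wise division, whose $j$-th entry is $-\frac{1}{k\,\widetilde{f}_j}$. Applying the chain rule through the network, $\boldsymbol{z}=\nabla_{\widetilde{\boldsymbol{f}}}\ell(\widetilde{\boldsymbol{f}},\boldsymbol{u})\cdot\nabla_{\boldsymbol{\theta}}\widetilde{\boldsymbol{f}}$ with the Jacobian $\nabla_{\boldsymbol{\theta}}\widetilde{\boldsymbol{f}}\in\mathbb{R}^{k\times p}$, and summing over the $k$ output coordinates yields $\boldsymbol{z}=-\frac{1}{k}\sum_{j=1}^{k}\frac{\nabla_{\boldsymbol{\theta}}\widetilde{f}_j}{\widetilde{f}_j}$, which is precisely the claimed bias. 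Since the OE term is simply added to the original training loss $\ell(\boldsymbol{f},y)$, this $\boldsymbol{z}$ is exactly the extra contribution to $\nabla_{\boldsymbol{\theta}}\ell(\boldsymbol{f},y)$, completing the derivation.

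The calculation itself is routine, so the only point demanding care — and the one worth emphasizing — is interpretive rather than technical: unlike the ODNL penalty of Proposition 1, where the one-hot target $\boldsymbol{e}^{\widetilde{y}}$ is drawn from $\mathcal{U}_k$ and the induced term is a random vector, here the target $\boldsymbol{u}$ is fixed, so $\boldsymbol{z}$ is a single deterministic bias with no sampling involved. I would close by noting that this bias coincides exactly with the expectation $\mathbb{E}(\boldsymbol{z})=-\frac{1}{k}\sum_{j=1}^{k}\frac{\nabla_{\boldsymbol{\theta}}\widetilde{f}_j}{\widetilde{f}_j}$ computed in Proposition 1, which is the conceptual payoff: OE contributes only the mean of the ODNL noise and none of its random-direction, minimum-escaping component.
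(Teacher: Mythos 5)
Your proposal is correct and follows essentially the same route as the paper: a direct chain-rule computation $\boldsymbol{z}=\nabla_{\widetilde{\boldsymbol{f}}}\ell_{\mathrm{OE}}\cdot\nabla_{\boldsymbol{\theta}}\widetilde{\boldsymbol{f}}=-\frac{1}{k}\sum_{j=1}^{k}\nabla_{\boldsymbol{\theta}}\widetilde{\boldsymbol{f}}_j/\widetilde{\boldsymbol{f}}_j$, with your rewriting of the OE term as cross-entropy against the uniform target $\boldsymbol{u}=\frac{1}{k}\mathbf{1}$ being only a cosmetic repackaging of the same calculation. Your closing remark that this deterministic bias equals the expectation of the ODNL noise from Proposition 1 is also the comparison the paper itself draws in the main text.
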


\begin{proof}
The regularization item of OE is $ \ell_{\mathrm{OE}} = - \frac{1}{k}\cdot\sum_{i=1}^{k}\log \boldsymbol{\widetilde{f}}_i$. 

Then the gradient of $\ell_{\mathrm{OE}}$ w.r.t $\theta$ is 

\begin{equation*}
\nabla_{\theta} \ell_{\mathrm{OE}} = \nabla_{\boldsymbol{\widetilde{f}}}\ell_{\mathrm{OE}} \cdot \nabla_{\theta}\boldsymbol{\widetilde{f}} = \nabla_{\boldsymbol{\widetilde{f}}}\left(- \frac{1}{k}\cdot\sum_{j=1}^{k}\log \boldsymbol{\widetilde{f}}_j\right) \cdot \nabla_{\theta}\boldsymbol{\widetilde{f}} = - \frac{1}{k}\cdot\sum_{j=1}^{k}\frac{\nabla_{\theta}\boldsymbol{\widetilde{f}}_j}{\boldsymbol{\widetilde{f}}_j}
\end{equation*}

\end{proof}

\section{More details on experiment setup}
\label{app:exp_settings}

\subsection{Datasets}

\noindent\textbf{Auxiliary datasets}. 300K Random Images \cite{hendrycks2019oe} is a cleaned and debiased dataset with 300K natural images. In this dataset, Images that belong to CIFAR classes from it, images that belong to Places or LSUN classes, and images with divisive metadata are removed so that $\mathcal{D}_{\mathrm{train}}$ and $\mathcal{D}_{\mathrm{out}}$ are disjoint.
We use the dataset as the open-set auxiliary dataset for experiments with CIFAR-10 and CIFAR-100. In particular, 
For experiments with Clothing1M, ImageNet 2012 dataset \cite{ILSVRC15} is used as the open-set auxiliary dataset and we remove all classes related to clothes.
CIFAR-5m \cite{ho2020denoising, nakkiran2020deep} is a dataset of 6 million synthetic CIFAR-10-like images. For experiments in Fig.~1 and Fig.~3a, we build the closed-set dataset by sampling class-balanced subsets from CIFAR-5m.

\noindent\textbf{OOD test datasets}. Following OE \cite{hendrycks2019oe}, we comprehensively evaluate OOD detectors on artificial and real anomalous distributions, including: Gaussian, Rademacher, Blobs, Textures \cite{cimpoi2014describing}, SVHN \cite{yuval2011svhn}, Places365 \cite{zhou2017places}, LSUN-Crop \cite{yu2015lsun}, LSUN-Resize \cite{yu2015lsun}, iSUN \cite{xu2015turkergaze}. For experiments on CIFAR-10, we also use CIFAR-100 as OOD test dataset. \textit{Gaussian} noises have each dimension i.i.d. sampled from an isotropic Gaussian distribution. \textit{Rademacher} noises are images where each dimension is $-1$ or $1$ with equal probability, so each dimension is sampled from a symmetric Rademacher distribution. \textit{Blobs} noises consist in algorithmically generated amorphous shapes with definite edges. 
\textit{Textures} \cite{cimpoi2014describing} is a dataset of describable textural images. 
\textit{SVHN} dataset \cite{yuval2011svhn} contains 32 × 32 color images of house numbers. There are ten classes comprised of the digits 0-9.
\textit{Places365} \cite{zhou2017places} consists in images for scene recognition rather than object recognition.
\textit{LSUN} \cite{yu2015lsun} is another scene understanding dataset with fewer classes than Places365. Here we use \textit{LSUN-Crop} and \textit{LSUN-Resize} to denote the cropped and resized version of the LSUN dataset respectively.
\textit{iSUN} \cite{xu2015turkergaze} is a large-scale eye tracking dataset, selected from natural scene images of the SUN database \cite{xiao2010sun}. 


\subsection{Setups}
We conduct all the experiments on NVIDIA GeForce RTX 3090, and implement all methods with default parameters by PyTorch \cite{NEURIPS2019_9015}. Following GCE \cite{zhang2018generalized} and SLN \cite{chen2021noise}, we use 5k noisy samples as the validation set to tune the hyperparameters. We then train the model on the full training set and report the average test accuracy over the last 5 epochs.

\noindent\textbf{For experiments in Fig.~1, Fig.~2, Fig.~3a and Fig.~3c}. 
We use WRN-40-2 \cite{zagoruyko2016wide} and the network trains for 200 epochs with a dropout rate of 0.3, using SGD with a momentum of 0.9, a weight decay of 0.0005. In each iteration, both the batch sizes of the original dataset and the open-set auxiliary dataset are set as 128. Note that we compare the performance of training with different-sized auxiliary datasets in Fig.~1, the batch size of the auxiliary dataset is fixed as 128 in all cases. We set the initial learning rate as 0.1, and reduce it by a factor of 10 after 80 and 140 epochs. For drawing loss landscapes in Fig.~2, we use the technique from the \textit{loss-landscapes} library \cite{li2018visualizing}. For experiments in Fig.~3c, we use 40\% noise rate in all types of inherent noisy labels.

\noindent\textbf{For experiments in Fig.~3b}. We use the same setting as that of SLN \cite{chen2021noise}. Specifically, we train WRN-28-2 \cite{zagoruyko2016wide} for 300 epochs using SGD with learning rate 0.001, momentum 0.9, weight decay 0.0005 and a batch size of 128. For the hyperparameter $\sigma$ of SLN, we use $\sigma=1$ for symmetric noise and $\sigma=0.5$ otherwise. Here, we use 40\% noise rate in all types of inherent noisy labels, following the official implementation of SLN. 

\noindent\textbf{For experiments on Clothing1M}. We use the same setting as that of DivideMix \cite{li2020dividemix}. Specifically, we use ResNet-50 with ImageNet pretrained weights for 80 epochs. The initial learning rate is set as 0.002 and reduced by a factor of 10 after 40 epochs. For each epoch, we sample 1000 mini-batches from the training data while ensuring the labels are balanced. 

\noindent\textbf{Method-specific hyperparameters}. The backbone and training settings of the compared methods are not unified in previous paper and we reimplement all methods in the same backbone for a fair comparison. For experiments on CIFAR-10/100, we set $\lambda=0.5$ for JoCoR \cite{wei2020combating}. For experiments on Clothing1M, we use the default hyperparameters for DivideMix: $M=2, T=0.5, \tau=0.5, \lambda_u=0, \alpha=0.5$, and we set the standard deviation of SLN as $\sigma = 0.2$. 

\noindent\textbf{Guideline for tuning $\eta$}. We tune the hyperparameter $\eta$ following a guideline that has been widely used in existing literature like SLN \cite{chen2021noise}. Given a new dataset with unknown noise, we suggest quickly searching the best value of $\eta$ (denoted as $\eta_o$) based on the binary search using the validation accuracy throughout training. 1) If we observe a decrease of validation accuracy at the late stage of training, it implies overfitting and $\eta < \eta_o$. 2) Otherwise, we have $\eta \geq \eta_o$. Based on 1) and 2), we can conduct a binary search to quickly find the best value of $\eta$ even if one would like to search $\eta$ in a very detailed range. The best values of $\eta$ for vanilla ODNL are reported in Table \ref{tab:best_eta}.

\begin{table*}[ht]
\footnotesize
\centering
\renewcommand\arraystretch{1}
\caption{ The best test accuracy (\%) and the value of $\eta$ on CIFAR-10/100 using vanilla ODNL.} 
\label{tab:best_eta}
\setlength{\tabcolsep}{3mm}{
\begin{tabular}{ccccccc}
\toprule
Dataset & Method &  Sym-20\% & Sym-50\% &Asym & Dependent & Open \\
\midrule
\multirow{2}*{CIFAR-10} & Ours & 91.06 &  82.50 &  90.00 & 85.37 &  91.47 \\
 & $\eta$ & 2.5 &  2.5 &  3.0 & 3.5 & 2.0\\
 \midrule
\multirow{2}*{CIFAR-100} & Ours & 68.82 &  54.08 &  58.61 & 62.45 &  66.95 \\
 & $\eta$ & 1.0 &  1.0 &  2.0 & 2.0 & 1.0 \\
\bottomrule
\end{tabular}
}
\vspace{-10pt}
\end{table*}

\section{More empirical results}
\label{app:results}

\subsection{Our method also works on other architectures.}


\begin{table*}[ht]
\footnotesize
\centering
\renewcommand\arraystretch{1}
\caption{ Test accuracy (\%) on CIFAR-10 under symmetric-40\% noise rate training with various architectures. The bold indicates the improved results.} 
\label{tab:resnet}
\setlength{\tabcolsep}{3mm}{
\begin{tabular}{cccc}
\toprule
 Method &  WRN-40-2 & ResNet-18 & VGG-11 \\
\midrule
 Standard & 77.55 & 58.60 &  58.94 \\
\midrule
 ODNL (Ours) & \textbf{86.29}  &  \textbf{80.88} & \textbf{74.27}\\
\bottomrule
\end{tabular}
}
\end{table*}

Our regularization method improves robustness against label noise for more than just WRN architectures. Table \ref{tab:resnet} shows that ODNL also improves performance under various types of noisy labels with ResNet-18 \cite{he2016deep} and VGG-11 \cite{simonyan2014very}. In particular, we set weight decay as 0.001 for experiments with ResNet-18 and VGG-11.





\subsection{Different choices of open-set auxiliary datasets.}
In the experiments in Subsection 2.2, we found that the sample size of auxiliary dataset is important for the generalization performance when the labels of open-set samples are fixed. To relieve this requirement, we introduce dynamic noisy labels to strengthen our method in Section 3. In particular, our method can use only 50,000 examples from 300K Random Images to achieve comparable performance with using all examples from this dataset. Moreover, we show that using simple Gaussian noises as auxiliary dataset can also improve robustness against noisy labels, but do not work as well as real data. For example, in the case of symmetric-40\% noise on CIFAR-10, using ODNL with simple Gaussian noises achieves 80.24\% accuracy while using ODNL with 300K Random Images achieve 82.03\%. In addition to size and realism, we also find that diversity of auxiliary dataset is not important for robustness against noisy labels, while OE \cite{hendrycks2019oe} claims that it is an important factor for OOD detection. We conduct experiments on CIFAR-10 with symmetric-40\% noises and use the subset of CIFAR-100 with different number of classes as auxiliary dataset. The sample size of these subsets are fixed as 5,000. The results of using different subsets are very close, achieving 81.56\% test accuracy.

\begin{table*}[!t]
\centering
\renewcommand\arraystretch{1}
\caption{ OOD detection performance comparison on CIFAR-10 under symmetric-40\% noisy labels with different OOD test dataset. $\uparrow$ indicates larger values are better and $\downarrow$ indicates smaller values are better.} 
\label{tab:ood_noise}
\setlength{\tabcolsep}{5mm}{
\begin{tabular}{ccccc}
\toprule
 OOD test dataset & Method & FPR95 $\downarrow$	& AUROC $\uparrow$		& AUPR $\uparrow$	\\
\midrule
\multirow{3}*{Gaussian} & MSP & 66.14 &  72.29 &  27.08\\
 &MSP+OE & 6.58 &  97.39 &  78.37 \\
& MSP+Ours & 2.68 &  98.54 &  83.87 \\
\midrule
\multirow{3}*{Rademacher} & MSP & 91.7&  49.44 &  15.26\\
 &MSP+OE & 5.72 &  97.43 &  76.51 \\
& MSP+Ours & 1.19 &  99.34 &  91.30 \\
\midrule
\multirow{3}*{Blob} & MSP & 99.4&  28.05 &  10.92\\
 &MSP+OE & 12.51&  97.42&  90.62 \\
& MSP+Ours & 3.46 &  99.13 &  94.81 \\
\midrule
\multirow{3}*{Textures} & MSP & 98.62&  49.26 &  16.89\\
 &MSP+OE & 25.43&  94.66 &  79.70 \\
& MSP+Ours & 11.64 &  97.43 &  88.67 \\
\midrule
\multirow{3}*{SVHN} & MSP & 95.78&  56.19 &  19.36\\
 &MSP+OE & 44.26&  90.67 &  69.91 \\
& MSP+Ours & 18.53&  95.38 &  79.63\\
\midrule
\multirow{3}*{CIFAR-100} & MSP & 94.92&  56.2 & 19.86\\
 &MSP+OE & 71.09&  83.42&  60.41\\
& MSP+Ours & 50.16 &  88.50 &  66.55 \\
\midrule
\multirow{3}*{LSUN-C} & MSP & 92.03&  60.38 &  22.38\\
 &MSP+OE & 13.92&  96.68&  86.28 \\
& MSP+Ours & 9.47&  97.52 &  87.47  \\
\midrule
\multirow{3}*{LSUN-R} & MSP & 93.36&  60.42 &  22\\
 &MSP+OE & 11.84&  96.77&  82.96 \\
& MSP+Ours & 6.16&  98.11 &  87.37 \\
\midrule
\multirow{3}*{iSUN} & MSP & 92.31&  59.75&  21.6\\
 &MSP+OE & 12.34&  96.47&  80.78 \\
& MSP+Ours & 6.50&  98.01 &  86.53\\
\midrule
\multirow{3}*{Places365} & MSP & 95.84&  54.41 &  18.72\\
 &MSP+OE & 45.43& 91.54 &  75.38 \\
& MSP+Ours & 24.49&  94.75 &  80.81 \\

\bottomrule

\end{tabular}
}
\vspace{-10pt}
\end{table*}

\begin{table*}[!t]
\centering
\renewcommand\arraystretch{1}
\caption{ OOD detection performance comparison on CIFAR-10 with clean labels. $\uparrow$ indicates larger values are better and $\downarrow$ indicates smaller values are better.} 
\label{tab:ood_clean}
\setlength{\tabcolsep}{5mm}{
\begin{tabular}{ccccc}
\toprule
 OOD test dataset & Method & FPR95 $\downarrow$	& AUROC $\uparrow$		& AUPR $\uparrow$	\\
\midrule
\multirow{3}*{Gaussian}  & MSP&12.45 & 95.71 & 75.55 \\
 &MSP+OE&0.83  & 99.36 & 90.36 \\
& MSP+Ours&0.74  & 99.59 & 94.87 \\
\midrule
\multirow{3}*{Rademacher}  & MSP&23.68 & 87.46 & 40.80 \\
 &MSP+OE&0.94  & 97.69 & 87.82 \\
& MSP+Ours&0.56  & 99.73 & 96.95 \\
\midrule
\multirow{3}*{Blob}  & MSP&27.09 & 90.79 & 58.76 \\
 &MSP+OE&0.96  & 99.43 & 93.08 \\
& MSP+Ours&0.73  & 99.66 & 96.83 \\
\midrule
\multirow{3}*{Textures}  & MSP&48.10 & 87.98 & 59.17 \\
 &MSP+OE&2.44  & 99.05 & 93.46 \\
& MSP+Ours&3.05  & 99.13 & 95.33 \\
\midrule
\multirow{3}*{SVHN}  & MSP&20.96 & 92.73 & 65.29 \\
 &MSP+OE&1.60  & 99.20 & 92.58 \\
& MSP+Ours&1.94  & 99.25 & 93.37 \\
\midrule
\multirow{3}*{CIFAR-100}  & MSP&48.56 & 87.29 & 55.53 \\
 &MSP+OE&21.65 & 94.89 & 81.57 \\
& MSP+Ours&23.53 & 94.56 & 81.26 \\
\midrule
\multirow{3}*{LSUN-C}  & MSP&15.22 & 95.42 & 78.18 \\
 &MSP+OE&3.51  & 99.15 & 95.15 \\
& MSP+Ours&2.84  & 99.25 & 95.92 \\
\midrule
\multirow{3}*{LSUN-R}  & MSP&27.31 & 91.43 & 64.92 \\
 &MSP+OE&2.74  & 99.20 & 94.35 \\
& MSP+Ours&3.78  & 98.84 & 92.59 \\
\midrule
\multirow{3}*{iSUN}  & MSP&27.36 & 91.48 & 64.65 \\
 &MSP+OE&2.71 & 99.17 & 94.30 \\
& MSP+Ours&4.29  & 98.71 & 92.27 \\
\midrule
\multirow{3}*{Places365}  & MSP&50.82 & 87.40 & 57.33 \\
 &MSP+OE&10.10  & 97.60 & 90.82 \\
& MSP+Ours&10.64 & 97.52 & 90.93 \\
\hline
\hline
\multirow{3}*{Mean}  & MSP&30.16 & 90.77 & 62.02 \\
 &MSP+OE&4.75  & 98.63 & 91.35 \\
& MSP+Ours&5.21  & 98.62 & 93.03\\
\bottomrule
\end{tabular}
}
\vspace{-10pt}
\end{table*}

\subsection{Detailed results for OOD detection.}

Table \ref{tab:ood_noise} presents the detailed results of OOD detection performance on CIFAR-10 under symmetric-40\% noisy labels with various OOD test datasets. From the results, we show that our method can outperform OE \cite{hendrycks2019oe} in OOD detection tasks when the labels of training dataset are noisy.
Table \ref{tab:ood_clean} presents the detailed results of OOD detection performance on CIFAR-10 under clean labels with various OOD test datasets. We can observe that our method achieve comparable performance to OE \cite{hendrycks2019oe}. The results also support the analysis in Section 4.


\section{Discussion}
\label{app:discussion}

\textbf{Relations to OAT.} OAT \cite{lee2021removing} aims to use out-of-distribution data to improve generalization in the context of adversarial robustness. While OAT is relevant to our work, we note that there are key differences in both technique and insight perspectives:
\begin{itemize}
\item Technique: OAT regularizes the softmax probabilities to be a uniform distribution for out-of-distribution data, which is the same as Outlier Exposure \cite{hendrycks2019oe}. As analyzed in Section 4, the regularization in OE and OAT cannot improve the robustness against noisy labels, because it does not induce dynamic noises and the optimization of parameters always follows the direction of gradient descent.
\item Insight: OAT shows that OOD data samples share the same undesirable features as those of the in-distribution data so that these samples could also be used to remove the influence of undesirable features in adversarial robustness. In our work, we show that open-set noisy labels could be harmless and even benefit the robustness against inherent noisy labels. We give an intuitive interpretation for the phenomena by “insufficient capacity” and understand the effects of open-set noises from the perspective of SGD noises.
\end{itemize}


\clearpage


\end{appendices}

\end{document}